\documentclass[]{sysu_preprint}

\usepackage[utf8]{inputenc}

\usepackage{url}
\usepackage{amsmath}
\usepackage{amssymb}
\usepackage{amsfonts}
\usepackage{mathtools}
\usepackage{amsthm}
\usepackage{nicefrac}       
\usepackage{colortbl}       
\usepackage{xcolor}         
\usepackage{enumitem}       
\usepackage{pifont}         
\usepackage{algorithm}
\usepackage{algorithmic}

\definecolor{ourmethod}{RGB}{230, 243, 255}  
\definecolor{datasetcol}{RGB}{232, 245, 233}  
\definecolor{best}{gray}{0.75}  
\definecolor{second}{gray}{0.88}  

\theoremstyle{plain}
\newtheorem{theorem}{Theorem}[section]
\newtheorem{proposition}[theorem]{Proposition}

\theoremstyle{definition}

\newtheorem{assumption}[theorem]{Assumption}
\crefname{assumption}{Assumption}{Assumptions}
\theoremstyle{remark}

\usepackage[disable,textsize=tiny]{todonotes}

\providecommand{\RETURN}{\STATE \textbf{return} }

\title{CoFlow: Coordinated Few-Step Flow for Offline Multi-Agent Decision Making}

\renewcommand{\authorlist}{%
  \authorfont{\sffamily Guowei Zou, Haitao Wang, Beiwen Zhang, Boning Zhang, Hejun Wu}%
}
\renewcommand{\affiliationlist}{\affiliationfont{\sffamily\seedblue{Sun Yat-sen University}}}
\abstract{
Generative models have emerged as a promising paradigm for offline multi-agent reinforcement learning (MARL), but existing approaches require many iterative sampling steps. Recent few-step acceleration methods either distill a joint teacher into independent students or apply averaged velocity fields independently to each agent. Unfortunately, these few-step approaches hurt inter-agent coordination. \emph{We show the efficiency--coordination trade-off is not necessary}: single-pass multi-agent generation can preserve coordination when the velocity field is natively joint-coupled. We propose \textbf{Co}ordinated few-step \textbf{Flow} (CoFlow), an architecture that combines Coordinated Velocity Attention (CVA) with Adaptive Coordination Gating. A finite-difference consistency surrogate further replaces memory-prohibitive Jacobian-vector product backpropagation through the averaged velocity field with two stop-gradient forward passes. Across 60 configurations spanning MPE, MA-MuJoCo, and SMAC, CoFlow matches or surpasses Gaussian / value-based, transformer, diffusion, and other prior flow baselines on episodic return. Three independent coordination probes confirm that CoFlow's gains flow through inter-agent coordination rather than per-agent capacity. A denoising-step sweep shows that single-pass inference suffices on every configuration. CoFlow reaches state-of-the-art coordination quality in 1--3 denoising steps under both centralized and decentralized execution. \textbf{Project page: \url{https://guowei-zou.github.io/coflow/}.}
}

\hypersetup{
  pdftitle={CoFlow: Coordinated Few-Step Flow for Offline Multi-Agent Decision Making},
  pdfauthor={Guowei Zou, Haitao Wang, Beiwen Zhang, Boning Zhang, Hejun Wu},
  pdfsubject={cs.AI, cs.LG, cs.MA},
  pdfkeywords={multi-agent reinforcement learning, offline reinforcement learning, flow matching, few-step generation}
}

\begin{document}

\maketitle
\let\svthefootnote\thefootnote
\renewcommand{\thefootnote}{}
\footnotetext{\textbf{Emails:} \{zougw, zhangbw39, zhangbn7\}@mail2.sysu.edu.cn; \{wanght76, wuhejun\}@mail.sysu.edu.cn.}
\let\thefootnote\svthefootnote
\vskip -0.05in
\begin{center}
\includegraphics[width=0.90\textwidth]{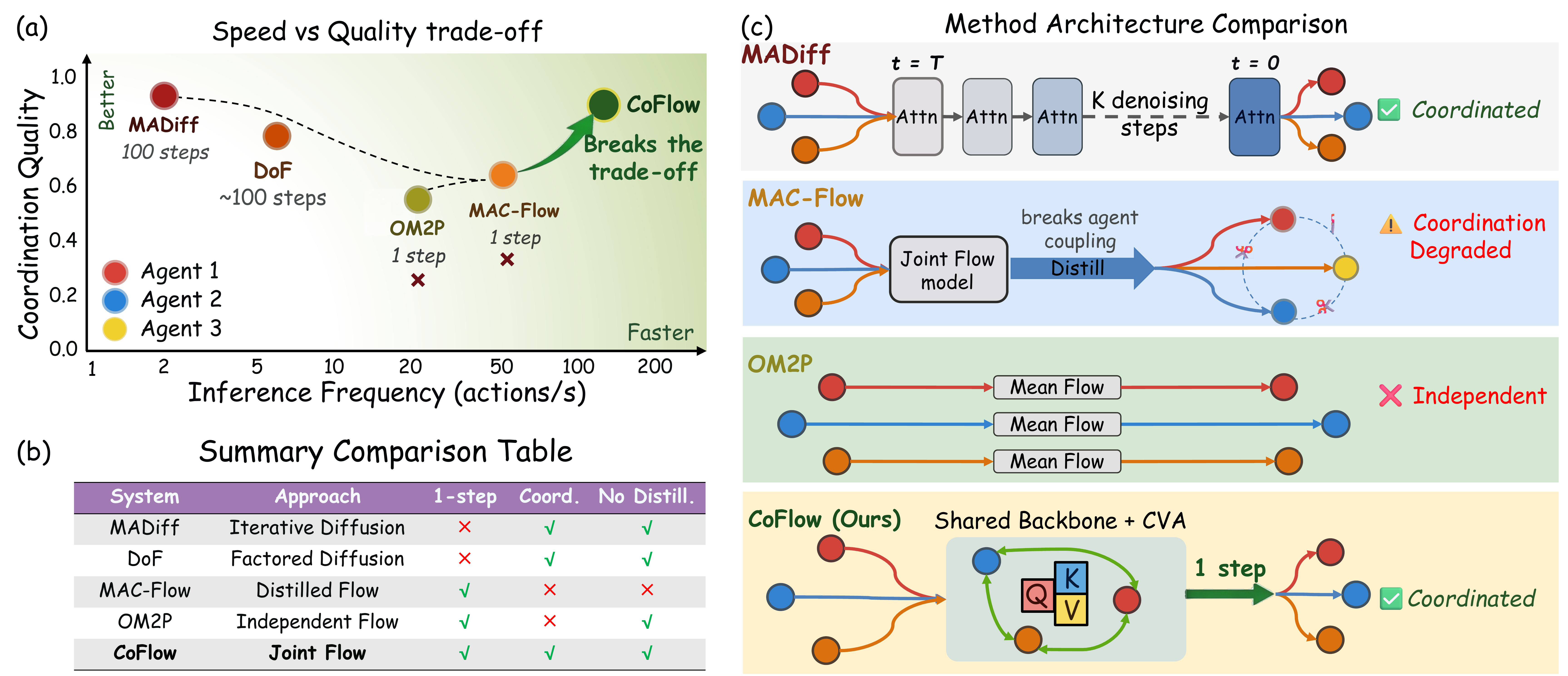}
\end{center}
\vskip -0.15in
{\captionsetup{font=small}
\captionof{figure}{The quality--efficiency dilemma in offline multi-agent trajectory generation. \textbf{Top-left:}~speed--quality Pareto frontier. \textbf{Top-right:}~architectural comparison of MADiff, MAC-Flow, OM2P, and CoFlow. \textbf{Bottom-left:}~summary table: only CoFlow simultaneously achieves few-step inference and coordination preservation.}
}
\label{fig:method_comparison}
\vskip 0.04in

\section{Introduction}

Offline multi-agent reinforcement learning (MARL) has converged on generative models as a promising paradigm for capturing the multimodal joint-action distributions characteristic of cooperative behavior \citep{lu2025diffusion_veteran}. Diffusion-based methods \citep{zhu2024madiff, yuan2025madits} and flow-matching methods \citep{li2025dof} model joint trajectory distributions directly and have demonstrated strong empirical results on standard MARL benchmarks. Both approaches, however, generate trajectories through many iterative sampling steps. Each step incurs cross-agent attention, and thus runtime compounds with team size and bottlenecks deployment.

In the single-agent setting, the multi-step sampling pressure has been largely resolved through three complementary few-step strategies. \emph{Distillation-based consistency models} compress a multi-step teacher into a one-step student via self-consistency penalties \citep{song2023consistency, salimans2022progressive, song2020ddim}. \emph{Averaged velocity fields} parameterize the mean velocity over a time interval, so a single forward pass approximates many small Euler steps \citep{geng2025avgvelocity, geng2025fastforward, guo2025intervalsplitting, zou2025dispersive, zou2026stepenough}. Shortcut reparameterization \citep{frans2025shortcut} forms a closely related family. \emph{Policy optimization for one-step generators} adapts one-step generators to RL via Q-guidance, gradient flow, or rectified objectives \citep{park2025flowql, zhang2025efm, zhang2025reinflow, mcallister2026fpo}. Single-agent few-step methods now routinely match multi-step quality at 1--3 steps, raising the natural question of whether single-agent acceleration techniques can rescue multi-agent generation from the multi-agent step bottleneck.

Attempts to carry single-agent few-step generation over to multi-agent settings broadly fall into two categories. The first distills a multi-step joint generator into per-agent students \citep{lee2025macflow}, following the broader distillation tradition \citep{song2023consistency, salimans2022progressive}. The second applies a single-agent one-step model agent-by-agent \citep{fan2025om2p}, leveraging averaged-velocity and shortcut formulations \citep{geng2025avgvelocity, frans2025shortcut}. These attempts weaken cooperation by construction: the distilled students no longer carry the teacher's cross-agent couplings, and the per-agent velocity fields never communicate at inference. The loss of inference-time communication is a structural limitation. Multi-step samplers accumulate cooperation through repeated cross-agent attention, whereas one-step inference removes the repeated attention rounds. The velocity field must therefore be \emph{natively joint-coupled} within a single forward pass, which prior multi-agent few-step routes do not achieve.

Our \textbf{Co}ordinated few-step \textbf{Flow} (\textbf{CoFlow}) closes this gap. Its architecture realizes native joint coupling by adopting \emph{Coordinated Velocity Attention} (CVA), a cross-agent attention block \citep{vaswani2017attention} embedded into the joint averaged velocity field at every U-Net skip connection. We pair CVA with zero-initialized \emph{Adaptive Coordination Gating}, so training begins from an independent-agent baseline and activates coupling only when the gradient signal favors cross-agent exchange. To keep consistency-regularized averaged-velocity training within single-GPU memory at multi-agent scale, we introduce a finite-difference surrogate that replaces Jacobian-vector product (JVP) backpropagation through the velocity field with two stop-gradient forward passes. The finite-difference replacement is, to our knowledge, novel, and the surrogate enables training at multi-agent scale. A structural decomposition framework (\Cref{thm:decomposition,thm:one_step_error}) certifies that CoFlow's joint velocity equals per-agent terms plus a coordination correction whose magnitude is bounded by two quantities that can be read directly from a trained model: the learned per-layer coordination-gate magnitude and the cross-agent feature diversity at each layer. The bound becomes tight when the gate magnitude and feature diversity are small, and we measure both quantities throughout the experiments to verify that the deployed models live inside the small-correction regime.

Our contributions are as follows. \textbf{(1) Mechanism for a natively joint-coupled velocity field}: CVA with Adaptive Coordination Gating embeds inter-agent coupling directly into the averaged velocity field, enabling cooperation in a single forward pass without distillation. \textbf{(2) Memory-efficient training algorithm}: a finite-difference consistency surrogate replaces JVP backpropagation with two stop-gradient forward passes. The finite-difference surrogate is novel to our knowledge and makes consistency-regularized training feasible on a single GPU at multi-agent scale. \textbf{(3) Theoretical framework and large-scale empirical study}: a structural decomposition isolates the controllable knobs (gating-projection scale and inter-agent feature diversity). Three independent coordination signals (dose-response, landmark-coverage rate, per-layer gating sign flip) verify coordination preservation across 60 configurations on MPE, MA-MuJoCo, and SMAC. Both centralized and decentralized execution are covered.

\section{Related Work}

\textbf{Offline multi-agent RL.} Offline MARL aims to learn cooperative policies from a fixed dataset without environment interaction \citep{levine2020offline, formanek2023otg, formanek2024dispelling}. Value-based methods regularize Q-values to mitigate distribution shift, including MA-ICQ \citep{yang2021believe}, MA-CQL and OMAR \citep{pan2022plan}, and OMIGA \citep{wang2023omiga}. Recent work scales these to larger teams via low-rank interaction structure, in-sample sequential updates, and per-agent score decomposition \citep{zhan2025exploiting, liu2025inspo, qiao2025omsd}. Sequence-modeling baselines such as the multi-agent decision transformer \citep{kurenkov2022multi} and policy-regularization methods \citep{fujimoto2021minimalist, kostrikov2022iql} complete the comparison set. However, these methods parameterize per-agent Gaussian or autoregressive policies and cannot directly represent the multimodal joint-action distributions of coordinated behavior. This motivates the generative approaches discussed next.

\textbf{Single-agent few-step / one-step generation.} Generative trajectory models have become the leading paradigm for single-agent offline RL, covering planning, policy learning, energy guidance, and visuomotor control \citep{janner2022planning, wang2023diffusion, hansen2023idql, chen2023sfbc, ren2024dppo, lu2024contrastive, chi2023diffusion}, all rooted in denoising diffusion probabilistic models (DDPM) and score-based generative modeling \citep{ho2020ddpm, song2021score}. Flow matching \citep{lipman2023flow, liu2023flow} reaches parity via Flow Q-learning, energy-weighted flow matching, ReinFlow, and Flow Policy Optimization (FPO) \citep{park2025flowql, zhang2025efm, zhang2025reinflow, mcallister2026fpo}. Reducing the ${\sim}20$ sampling steps these methods require is an active subfield. \emph{Distillation} compresses a multi-step teacher into a one-step student via Consistency Models, Progressive Distillation, and denoising diffusion implicit models (DDIM) \citep{song2023consistency, salimans2022progressive, song2020ddim}. \emph{Direct one-step parameterizations} instead integrate the trajectory in a single forward pass via averaged velocity fields, shortcut reparameterization, dispersive regularization, and latent / discrete variants \citep{gao2025avgvelocity, frans2025shortcut, zou2025dispersive, zou2026stepenough, dao2024latent, gat2024discrete, rombach2022latent}. These methods now match multi-step quality at 1--3 steps in single-agent settings, but none has an inter-agent mechanism by design.

\textbf{Multi-agent few-step / one-step generation.} Existing multi-agent generative methods sit at one of two extremes. The \emph{diffusion family} preserves coordination via dense cross-agent attention but inherits the multi-step inference cost. MADiff \citep{zhu2024madiff} embeds cross-agent attention into diffusion-based planning. MADiTS \citep{yuan2025madits} stitches high-quality coordination segments. \citet{lu2025diffusion_veteran} find that unconditional sampling with selection often beats guided sampling. The \emph{flow family} reduces sampling cost, but existing methods remove or factorize joint coordination. DoF \citep{li2025dof} factorizes a centralized diffusion via independence-conditioned decomposition, MAC-Flow \citep{lee2025macflow} distills a joint flow teacher into per-agent students, and OM2P \citep{fan2025om2p} applies averaged velocity fields per agent independently. Naive lifts of single-agent one-step methods to multi-agent settings reduce to the same per-agent factored failure mode. CoFlow resolves this trade-off by embedding inter-agent coupling directly into the joint averaged velocity field through Coordinated Velocity Attention with Adaptive Coordination Gating. The Joint Velocity Decomposition Theorem (\Cref{thm:decomposition}) certifies that the resulting correction is bounded.

\section{Methodology}
\label{sec:method}

\begin{figure*}[!htbp]
\centering
\includegraphics[width=0.95\textwidth]{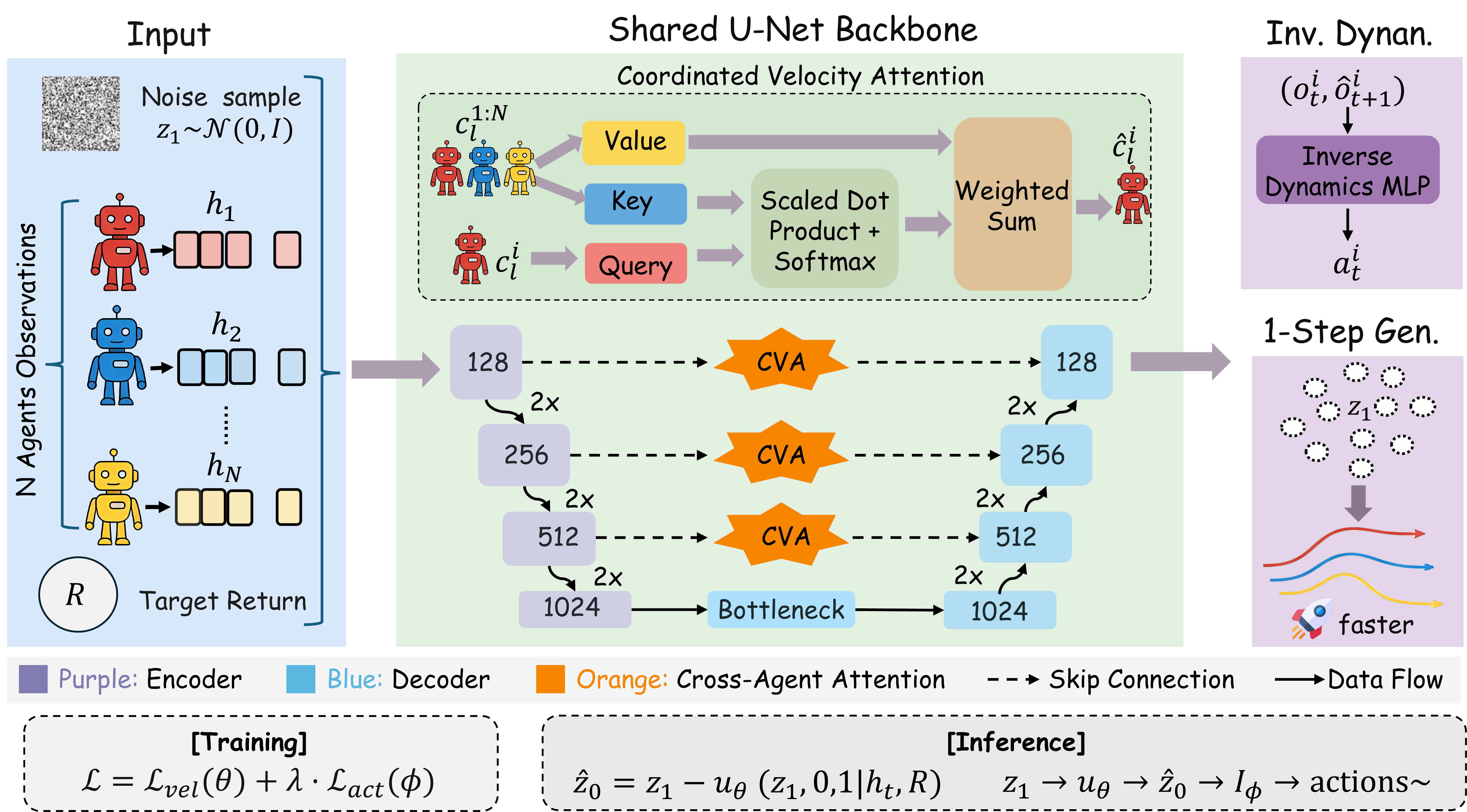}
\caption{CoFlow architecture overview: a weight-shared temporal U-Net with Coordinated Velocity Attention (CVA) embedded at every skip connection, plus a shared inverse-dynamics head.}
\label{fig:framework}
\end{figure*}

The central idea of CoFlow is the following. \emph{We do not treat coordination as an effect accumulated by repeated denoising; instead, we decompose the averaged velocity field into agent-wise generation and cross-agent coordination, and learn the coordination correction directly through gated cross-agent attention.} The full architecture is illustrated in \Cref{fig:framework}, and we now describe each component in turn.

\subsection{Problem Formulation}

We consider offline cooperative multi-agent decision making with a fixed dataset of joint trajectories. Each trajectory contains the observations and actions of $N$ agents, and the goal is to learn a generative policy that produces high-return coordinated trajectories without further environment interaction \citep{bernstein2002complexity, lowe2017multi, kraemer2016multi}.

Let $\tau_0 \sim p_\tau$ denote a clean joint trajectory sampled from the offline dataset and $z_1 \sim \mathcal{N}(0, I)$ denote Gaussian noise. Flow matching \citep{lipman2023flow, liu2023flow} learns a velocity field along the linear interpolant
\begin{equation}
z_t = (1-t)\, \tau_0 + t\, z_1,
\end{equation}
with the conditional velocity $v_{\mathrm{cond}} = z_1 - \tau_0$. We parameterize an \emph{averaged velocity} $u_\theta(z, 0, t)$, which under linear interpolants satisfies $u^*(z, 0, t) = z_1 - \tau_0$. A standard averaged-velocity flow model is therefore trained by
\begin{equation}
\label{eq:flow_loss}
\mathcal{L}_{\mathrm{flow}}(\theta) = \mathbb{E}_{t, \tau_0, z_1}\!\left[\bigl\| u_\theta(z_t, 0, t) - (z_1 - \tau_0)\bigr\|_2^2\right].
\end{equation}
After training, the model can sample either through iterative Euler updates or through a single-pass shortcut $\hat{z}_0 = z_1 - u_\theta(z_1, 0, 1)$.

Although effective for trajectory generation, directly applying \Cref{eq:flow_loss} to offline multi-agent decision making is suboptimal. Under multi-step sampling, coordination can be gradually accumulated through repeated cross-agent interaction. When inference is reduced to one or a few denoising steps, that repeated communication is largely removed. The learned velocity field then behaves like a collection of weakly coupled per-agent velocity fields, sacrificing inter-agent coordination for inference efficiency.

\subsection{Coordinated Velocity Field}

To address this issue, we explicitly decompose the multi-agent velocity field into two components: an \emph{agent-wise} velocity that captures each agent's individual trajectory tendency, and a \emph{coordination} component that captures the interaction structure among agents. Formally, for agent $i$, we write
\begin{equation}
u_\theta^i = u_{\mathrm{ind}}^i + u_{\mathrm{coord}}^i,
\end{equation}
where $u_{\mathrm{ind}}^i$ is estimated from the agent's own features and $u_{\mathrm{coord}}^i$ is the cross-agent correction induced by teammate information. Unlike prior few-step methods that either distill a joint generator into independent students or apply one-step velocity estimation agent by agent, CoFlow keeps the velocity field natively joint-coupled.

Specifically, we introduce \emph{Coordinated Velocity Attention} (CVA) into the temporal U-Net backbone. At layer $l$, let $c^i_l$ be the feature of agent $i$. CVA updates it by
\begin{equation}
\label{eq:cva}
\hat{c}^i_l = c^i_l + \gamma_l \sum_{j=1}^{N} \mathrm{softmax}_j\!\left(\frac{(W_Q c^i_l)^\top W_K c^j_l}{\sqrt{d_k}}\right) W_V\, c^j_l,
\end{equation}
where $W_Q$, $W_K$, $W_V$ are shared query, key, and value projections. The learnable scalar $\gamma_l$ is an \emph{Adaptive Coordination Gate} that controls the strength of cross-agent message passing at layer $l$. We initialize $\gamma_l = 0$, so the model starts from an independent-agent baseline and activates coordination only when the training signal supports it. The independent backbone preserves agent-specific trajectory modeling, while CVA introduces a shared coordination correction directly into the velocity field.

\subsection{Finite-Difference Consistency Surrogate}

Although CVA restores joint coupling, single-pass generation still requires the averaged velocity field to be accurately estimated. A common solution is to impose consistency regularization on the averaged velocity field. The standard consistency target involves a Jacobian-vector product (JVP):
\begin{equation}
\label{eq:jvp_consistency}
u_\theta(z, r, t) \;\approx\; v_{\mathrm{cond}} - (t - r)\, \frac{\partial u_\theta}{\partial t}.
\end{equation}
However, computing and backpropagating through this JVP requires second-order differentiation, which is memory-prohibitive for multi-agent trajectory generation.

To avoid this cost, we introduce a \emph{finite-difference consistency surrogate}. Instead of explicitly computing or backpropagating through a JVP, we use two ordinary forward passes to form a damped finite-difference correction:
\begin{equation}
\label{eq:consistency_surrogate}
V_\theta(z_t, r, t) = u_\theta(z_t, 0, r) + (t-r)\,\mathrm{stopgrad}\!\bigl[\,u_\theta(z_t, 0, t) - u_\theta(z_t, 0, r)\,\bigr].
\end{equation}
The stop-gradient ensures that gradients flow only through $u_\theta(z_t, 0, r)$. The detached difference term provides a bounded finite-difference correction that nudges the reference-time velocity toward the endpoint-time velocity without storing a second-order graph. \Cref{prop:fd_jvp} shows that the surrogate correction is Taylor-controlled by the first and second time derivatives of the network, so its contribution to the training objective is governed by the sampled time gap.

\subsection{Overall Objective}

Combining the coordinated velocity field and the finite-difference consistency surrogate, the trajectory generation loss is
\begin{equation}
\mathcal{L}_{\mathrm{vel}}(\theta) = \mathbb{E}\!\left[\bigl\| V_\theta(z_t, r, t) - (z_1 - \tau_0) \bigr\|_2^2\right].
\end{equation}
Because generated trajectories must ultimately yield executable actions, we further attach a shared inverse-dynamics head $I^i_\phi:(o^i_t, o^i_{t+1}) \mapsto a^i_t$ implemented as a 3-layer multi-layer perceptron (MLP). The inverse-dynamics loss is
\begin{equation}
\mathcal{L}_{\mathrm{act}}(\phi) = \frac{1}{N}\sum_{i=1}^{N} \mathbb{E}\!\left[\bigl\| a^i_t - I^i_\phi(o^i_t, o^i_{t+1}) \bigr\|_2^2\right].
\end{equation}
The final objective of CoFlow is
\begin{equation}
\label{eq:total_loss}
\mathcal{L}(\theta, \phi) = \mathcal{L}_{\mathrm{vel}}(\theta) + \lambda\, \mathcal{L}_{\mathrm{act}}(\phi),
\end{equation}
with $\lambda = 1$ throughout. The same architecture supports both centralized training with centralized execution (CTCE) and centralized training with decentralized execution (CTDE) without modification: CoFlow-C trains with full teammate visibility, while CoFlow-D adds an attention mask that emulates agent-local observations. For return-quality control we apply classifier-free guidance to the velocity field, and locomotion tasks additionally condition on a short history window (details in \Cref{app:method_details,app:experiments}). At deployment, CoFlow samples a noisy joint trajectory $z_1$ and obtains the predicted trajectory in one to a few denoising steps via the single-pass shortcut $\hat{z}_0 = z_1 - u_\theta(z_1, 0, 1)$, after which the inverse-dynamics head produces the executable actions.


\FloatBarrier
\begingroup
\captionsetup[table]{font=small,skip=3pt}
\setlength{\intextsep}{8pt}
\setlength{\textfloatsep}{8pt}
\setlength{\floatsep}{8pt}
\begin{table}[H]
\caption{Performance on MPE, a continuous-action cooperative benchmark. All methods are evaluated on the same dataset, reward function, and 25-step episode protocol as \citet{pan2022plan}. The \textit{Mean} column reports each split's average per-agent 25-step return computed from the offline trajectories.}
\label{tab:mpe}
\centering
\small
\renewcommand{\arraystretch}{1.1}
\resizebox{\textwidth}{!}{%
\begin{tabular}{l>{\columncolor{datasetcol}}l >{\columncolor{datasetcol}}c cccccccc}
\toprule
 & \multicolumn{2}{c}{\cellcolor{datasetcol}\textit{Dataset}} & \multicolumn{5}{c}{\textit{Gaussian / Value-based}} & \textit{Diffusion} & \multicolumn{2}{c}{\textit{Flow (Ours)}} \\
Task & \cellcolor{datasetcol}Quality & \cellcolor{datasetcol}Mean & BC & MA-ICQ & MA-TD3+BC & MA-CQL & OMAR & MADiff & CoFlow-C & CoFlow-D \\
\midrule
\multirow{4}{*}{Spread} & Expert & 516.8 & 35.0{\tiny$\pm$2.6} & 104.0{\tiny$\pm$3.4} & 108.3{\tiny$\pm$3.9} & 98.2{\tiny$\pm$5.2} & 114.9{\tiny$\pm$2.6} & 95.0{\tiny$\pm$5.3} & \textbf{586.2{\tiny$\pm$10.4}} & \underline{554.6{\tiny$\pm$62.3}} \\
 & Md-Replay & 194.2 & 10.0{\tiny$\pm$3.8} & 13.6{\tiny$\pm$5.7} & 15.4{\tiny$\pm$5.6} & 31.4{\tiny$\pm$7.2} & 37.9{\tiny$\pm$6.1} & 30.3{\tiny$\pm$2.5} & \textbf{329.8{\tiny$\pm$28.6}} & \underline{278.0{\tiny$\pm$22.1}} \\
 & Medium & 259.9 & 31.6{\tiny$\pm$4.8} & 29.3{\tiny$\pm$5.5} & 39.4{\tiny$\pm$3.6} & 34.1{\tiny$\pm$7.2} & 47.9{\tiny$\pm$18.9} & 64.9{\tiny$\pm$7.7} & \textbf{373.5{\tiny$\pm$16.5}} & \underline{341.3{\tiny$\pm$10.7}} \\
 & Random & 159.8 & $-$0.5{\tiny$\pm$3.2} & 6.3{\tiny$\pm$3.5} & 9.8{\tiny$\pm$4.9} & 24.0{\tiny$\pm$9.8} & 34.4{\tiny$\pm$5.3} & 6.9{\tiny$\pm$3.1} & \textbf{352.4{\tiny$\pm$39.7}} & \underline{211.1{\tiny$\pm$12.1}} \\
\midrule
\multirow{4}{*}{Tag} & Expert & 185.6 & 40.0{\tiny$\pm$9.6} & 113.0{\tiny$\pm$14.4} & 115.2{\tiny$\pm$12.8} & 119.3{\tiny$\pm$14.0} & 123.9{\tiny$\pm$10.5} & 168.8{\tiny$\pm$13.1} & \textbf{286.7{\tiny$\pm$14.0}} & \underline{260.4{\tiny$\pm$19.2}} \\
 & Md-Replay & 14.2 & 0.9{\tiny$\pm$1.4} & 34.5{\tiny$\pm$27.8} & 28.7{\tiny$\pm$20.9} & 41.7{\tiny$\pm$15.3} & 47.1{\tiny$\pm$15.3} & 98.8{\tiny$\pm$11.3} & \underline{146.3{\tiny$\pm$11.4}} & \textbf{155.0{\tiny$\pm$38.9}} \\
 & Medium & 88.7 & 22.5{\tiny$\pm$1.8} & 63.3{\tiny$\pm$20.0} & 65.1{\tiny$\pm$29.5} & 61.7{\tiny$\pm$23.1} & 66.7{\tiny$\pm$23.2} & 133.5{\tiny$\pm$20.2} & \textbf{234.0{\tiny$\pm$7.8}} & \underline{178.2{\tiny$\pm$12.5}} \\
 & Random & $-$4.1 & 1.2{\tiny$\pm$0.5} & 2.2{\tiny$\pm$1.3} & 6.0{\tiny$\pm$2.1} & \underline{11.1{\tiny$\pm$2.8}} & 11.1{\tiny$\pm$2.8} & 10.6{\tiny$\pm$3.7} & \textbf{60.7{\tiny$\pm$12.2}} & 10.8{\tiny$\pm$5.0} \\
\midrule
\multirow{4}{*}{World} & Expert & 79.5 & 33.0{\tiny$\pm$9.9} & 109.5{\tiny$\pm$22.8} & 110.3{\tiny$\pm$21.3} & 119.8{\tiny$\pm$28.1} & 110.4{\tiny$\pm$25.7} & 109.3{\tiny$\pm$15.4} & \textbf{137.3{\tiny$\pm$13.0}} & \underline{121.0{\tiny$\pm$10.5}} \\
 & Md-Replay & 3.5 & 2.3{\tiny$\pm$1.5} & 12.0{\tiny$\pm$9.1} & 17.4{\tiny$\pm$8.1} & 19.3{\tiny$\pm$18.3} & \underline{42.9{\tiny$\pm$19.5}} & 19.8{\tiny$\pm$6.2} & 33.3{\tiny$\pm$4.0} & \textbf{51.0{\tiny$\pm$10.9}} \\
 & Medium & 43.3 & 25.3{\tiny$\pm$2.0} & 71.9{\tiny$\pm$20.0} & 73.4{\tiny$\pm$9.3} & 58.6{\tiny$\pm$11.2} & 74.6{\tiny$\pm$11.5} & 84.7{\tiny$\pm$12.3} & \textbf{131.2{\tiny$\pm$20.6}} & \underline{101.6{\tiny$\pm$18.8}} \\
 & Random & $-$6.8 & $-$2.4{\tiny$\pm$0.5} & 1.0{\tiny$\pm$3.2} & 2.8{\tiny$\pm$5.5} & 0.6{\tiny$\pm$2.0} & 5.9{\tiny$\pm$5.2} & \underline{6.1{\tiny$\pm$2.4}} & $-$3.5{\tiny$\pm$1.6} & \textbf{14.8{\tiny$\pm$4.4}} \\
\bottomrule
\end{tabular}}%

\end{table}

\begin{table}[H]
\caption{Performance on SMAC, a discrete-action benchmark with partial observability.}
\label{tab:smac}
\centering
\small
\renewcommand{\arraystretch}{1.1}
\resizebox{\textwidth}{!}{%
\begin{tabular}{l>{\columncolor{datasetcol}}l >{\columncolor{datasetcol}}c cccccccc}
\toprule
 & \multicolumn{2}{c}{\cellcolor{datasetcol}\textit{Dataset}} & \multicolumn{3}{c}{\textit{Gaussian / Value-based}} & \textit{Transformer} & \textit{Diffusion} & \textit{Flow} & \multicolumn{2}{c}{\textit{Flow (Ours)}} \\
Task & \cellcolor{datasetcol}Quality & \cellcolor{datasetcol}Mean & BC & MA-ICQ & MA-CQL & MADT & MADiff & DoF & CoFlow-C & CoFlow-D \\
\midrule
\multirow{3}{*}{3m} & Good & 16.5 & 16.0{\tiny$\pm$1.0} & 18.8{\tiny$\pm$0.6} & 19.0{\tiny$\pm$0.3} & 19.6{\tiny$\pm$0.7} & 19.3{\tiny$\pm$0.5} & \underline{19.8{\tiny$\pm$0.2}} & \textbf{20.0{\tiny$\pm$0.6}} & 17.3{\tiny$\pm$5.5} \\
 & Medium & 10.0 & 8.2{\tiny$\pm$0.8} & 18.1{\tiny$\pm$0.7} & \underline{18.9{\tiny$\pm$0.7}} & 17.2{\tiny$\pm$0.7} & 16.4{\tiny$\pm$2.6} & 18.6{\tiny$\pm$1.2} & \textbf{20.0{\tiny$\pm$2.6}} & 12.8{\tiny$\pm$5.9} \\
 & Poor & 4.7 & 4.4{\tiny$\pm$0.1} & \underline{14.4{\tiny$\pm$1.2}} & 5.8{\tiny$\pm$0.4} & 8.9{\tiny$\pm$0.3} & 10.3{\tiny$\pm$6.1} & 10.9{\tiny$\pm$1.1} & \textbf{14.7{\tiny$\pm$2.8}} & 10.4{\tiny$\pm$6.3} \\
\midrule
\multirow{3}{*}{2s3z} & Good & 18.3 & 18.2{\tiny$\pm$0.4} & \underline{19.6{\tiny$\pm$0.3}} & 19.1{\tiny$\pm$0.8} & 19.4{\tiny$\pm$0.1} & 15.9{\tiny$\pm$1.2} & 18.5{\tiny$\pm$0.8} & \textbf{20.0{\tiny$\pm$4.3}} & 19.2{\tiny$\pm$2.5} \\
 & Medium & 12.6 & 14.3{\tiny$\pm$0.7} & 17.2{\tiny$\pm$0.8} & 14.3{\tiny$\pm$2.0} & 17.4{\tiny$\pm$0.3} & 15.6{\tiny$\pm$0.3} & 18.1{\tiny$\pm$0.9} & \underline{18.7{\tiny$\pm$4.9}} & \textbf{18.8{\tiny$\pm$2.6}} \\
 & Poor & 6.9 & 6.7{\tiny$\pm$0.3} & \textbf{12.1{\tiny$\pm$0.4}} & \underline{10.1{\tiny$\pm$0.7}} & 9.9{\tiny$\pm$0.2} & 8.5{\tiny$\pm$1.3} & 10.0{\tiny$\pm$1.1} & 9.8{\tiny$\pm$0.2} & 10.1{\tiny$\pm$1.9} \\
\midrule
\multirow{3}{*}{5m\_vs\_6m} & Good & 16.6 & 16.6{\tiny$\pm$0.6} & 16.3{\tiny$\pm$0.9} & 13.8{\tiny$\pm$3.1} & \underline{18.0{\tiny$\pm$1.0}} & 16.5{\tiny$\pm$2.8} & 17.7{\tiny$\pm$1.1} & \textbf{18.3{\tiny$\pm$0.6}} & 14.0{\tiny$\pm$5.0} \\
 & Medium & 12.6 & 14.2{\tiny$\pm$0.5} & 17.2{\tiny$\pm$0.4} & 16.8{\tiny$\pm$3.1} & \underline{17.5{\tiny$\pm$0.4}} & 15.2{\tiny$\pm$2.6} & 16.2{\tiny$\pm$0.9} & \textbf{19.1{\tiny$\pm$1.9}} & 15.2{\tiny$\pm$4.9} \\
 & Poor & 7.5 & 7.5{\tiny$\pm$0.2} & 9.4{\tiny$\pm$0.4} & 10.4{\tiny$\pm$1.0} & 8.9{\tiny$\pm$0.3} & 8.9{\tiny$\pm$1.3} & 10.8{\tiny$\pm$0.3} & \textbf{12.5{\tiny$\pm$1.7}} & \underline{10.9{\tiny$\pm$3.1}} \\
\midrule
\multirow{3}{*}{8m} & Good & 16.9 & 16.7{\tiny$\pm$0.4} & 19.6{\tiny$\pm$0.3} & 13.1{\tiny$\pm$6.1} & 19.2{\tiny$\pm$0.1} & 18.9{\tiny$\pm$1.1} & 19.6{\tiny$\pm$0.3} & \textbf{20.0{\tiny$\pm$0.9}} & \underline{20.0{\tiny$\pm$0.0}} \\
 & Medium & 10.1 & 10.7{\tiny$\pm$0.5} & \textbf{18.6{\tiny$\pm$0.5}} & 16.3{\tiny$\pm$3.1} & \underline{18.0{\tiny$\pm$0.5}} & 16.8{\tiny$\pm$1.6} & \textbf{18.6{\tiny$\pm$0.8}} & 17.5{\tiny$\pm$3.9} & 17.2{\tiny$\pm$4.3} \\
 & Poor & 5.3 & 5.3{\tiny$\pm$0.1} & \underline{10.8{\tiny$\pm$0.8}} & 4.6{\tiny$\pm$2.4} & 5.1{\tiny$\pm$0.1} & 9.8{\tiny$\pm$0.9} & \textbf{12.0{\tiny$\pm$1.2}} & 7.0{\tiny$\pm$0.5} & 6.4{\tiny$\pm$0.7} \\
\bottomrule
\end{tabular}}%

\end{table}

\begin{table}[H]
\caption{Performance on MA-MuJoCo, a continuous-action locomotion benchmark.}
\label{tab:mamujoco}
\centering
\small
\renewcommand{\arraystretch}{1.1}
\resizebox{\textwidth}{!}{%
\begin{tabular}{l>{\columncolor{datasetcol}}l >{\columncolor{datasetcol}}c ccccccc}
\toprule
 & \multicolumn{2}{c}{\cellcolor{datasetcol}\textit{Dataset}} & \multicolumn{3}{c}{\textit{Gaussian / Value-based}} & \textit{Transformer} & \textit{Diffusion} & \multicolumn{2}{c}{\textit{Flow (Ours)}} \\
Task & \cellcolor{datasetcol}Quality & \cellcolor{datasetcol}Mean & BC & MA-TD3+BC & MA-CQL & MADT & MADiff & CoFlow-C & CoFlow-D \\
\midrule
\multirow{3}{*}{2$\times$Ant} & Good & 2556.0 & 2697{\tiny$\pm$267} & 2922{\tiny$\pm$194} & 464{\tiny$\pm$469} & \underline{2940{\tiny$\pm$56}} & \textbf{3105{\tiny$\pm$47}} & 2783.3{\tiny$\pm$332.0} & 2426.9{\tiny$\pm$362.8} \\
 & Medium & 1061.3 & 1145{\tiny$\pm$126} & 744{\tiny$\pm$283} & 799{\tiny$\pm$186} & 1210{\tiny$\pm$89} & 1241{\tiny$\pm$30} & \underline{1305.3{\tiny$\pm$225.4}} & \textbf{1312.2{\tiny$\pm$281.6}} \\
 & Poor & 393.6 & 954{\tiny$\pm$80} & \textbf{1256{\tiny$\pm$122}} & 857{\tiny$\pm$73} & 902{\tiny$\pm$24} & \underline{1037{\tiny$\pm$32}}& 863.3{\tiny$\pm$140.4} & 648.8{\tiny$\pm$164.3} \\
\midrule
\multirow{3}{*}{4$\times$Ant} & Good & 2754.3 & 2802{\tiny$\pm$133} & 2628{\tiny$\pm$971} & 344{\tiny$\pm$631} & \textbf{3090{\tiny$\pm$26}} & \underline{3087{\tiny$\pm$32}} & 2921.8{\tiny$\pm$283.6} & 2992.9{\tiny$\pm$334.6} \\
 & Medium & 1457.7 & 1617{\tiny$\pm$153} & 1843{\tiny$\pm$494} & 929{\tiny$\pm$349} & 1697{\tiny$\pm$43} & \underline{1897{\tiny$\pm$44}} & \textbf{1942.8{\tiny$\pm$245.5}} & 1778.3{\tiny$\pm$501.7} \\
 & Poor & 416.0 & 1033{\tiny$\pm$122} & 1075{\tiny$\pm$96} & 518{\tiny$\pm$112} & 1268{\tiny$\pm$51} & \underline{1332{\tiny$\pm$45}} & 1216.5{\tiny$\pm$128.1} & \textbf{1333.4{\tiny$\pm$145.5}} \\
\bottomrule
\end{tabular}}%

\end{table}
\endgroup
\FloatBarrier

This decomposition also provides a quantitative guarantee, formalized in \Cref{app:proofs}: a Joint Velocity Decomposition (\Cref{thm:decomposition}) bounds the CVA correction by the gating-projection scale and the inter-agent feature diversity, while a Wasserstein error bound (\Cref{thm:one_step_error}) propagates the resulting training error to sample quality. The same bound exposes the leading team-size dependence through the joint Euclidean norm, and it simplifies in the small-gating regime enforced by zero initialization, which \S\ref{sec:experiments} verifies holds throughout.

\begin{figure*}[!tbp]
\centering
\includegraphics[width=0.95\textwidth]{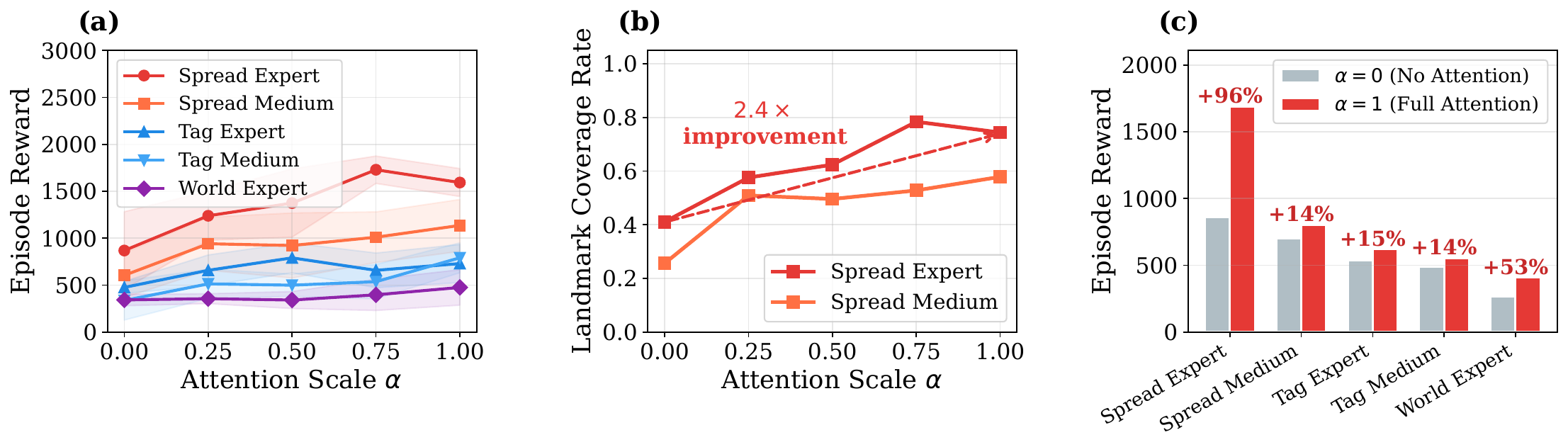}
\caption{Empirical evidence that Coordinated Velocity Attention enables coordination across 5 MPE tasks, evaluated under the CoFlow-C (centralized execution) variant. Panel \textbf{(a)} sweeps the CVA gating scale $\alpha$ and reports episode reward; panel \textbf{(b)} reports landmark coverage rate as a function of $\alpha$; panel \textbf{(c)} contrasts CVA-on against CVA-off across all tasks.}
\label{fig:coordination_evidence}
\end{figure*}

\begin{figure*}[!tbp]
\centering
\includegraphics[width=0.95\textwidth]{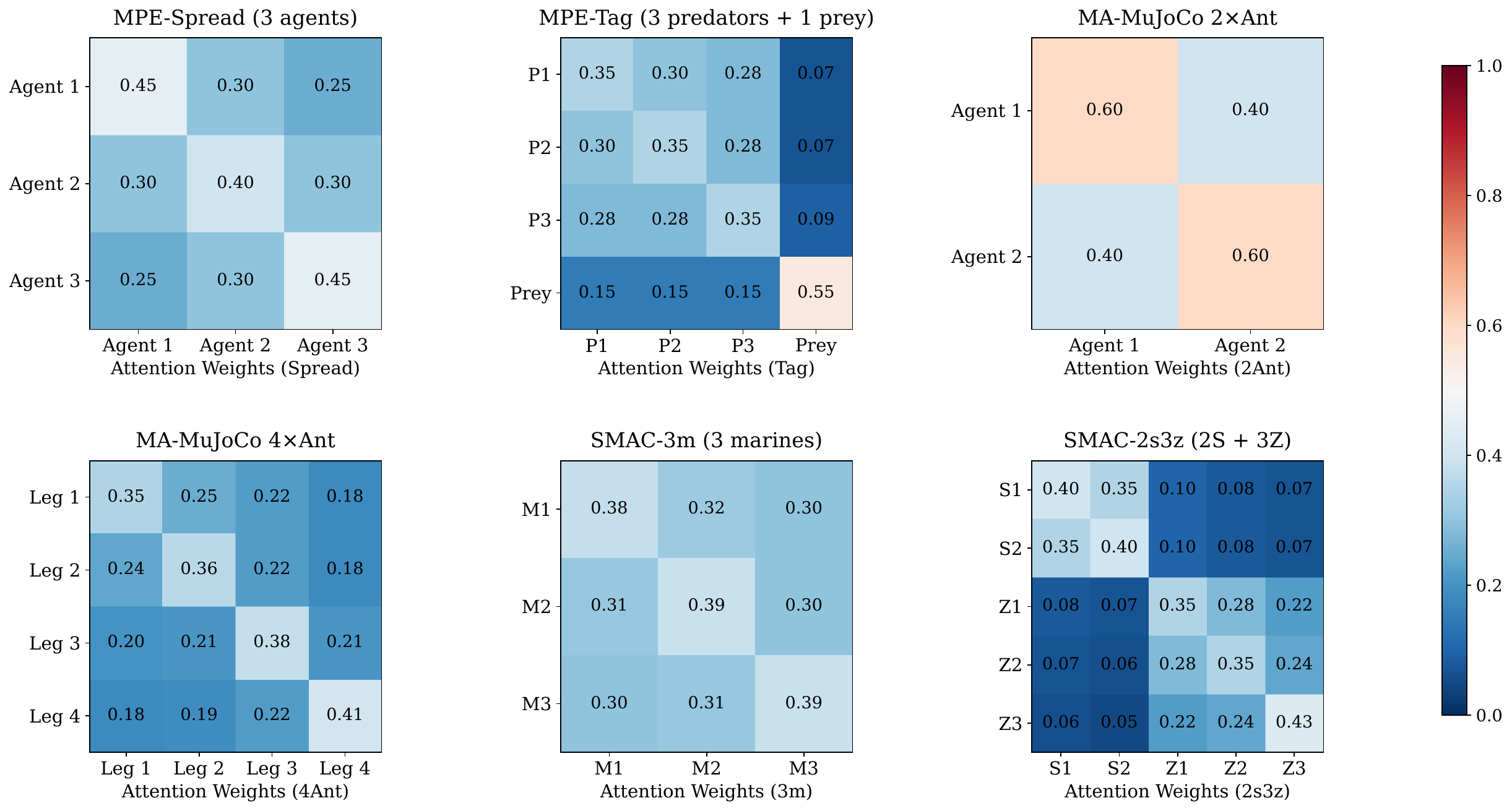}
\caption{Learned CVA weight matrices across four environments, extracted from CoFlow-C (centralized execution) runs. Red denotes high attention, blue low attention.}
\label{fig:attention_weights}
\end{figure*}

\section{Experiments}
\label{sec:experiments}

We organize the evaluation around three research questions ({\color{orange}RQ}) that build on one another. RQ1 asks whether CoFlow is competitive across benchmarks. If it is, RQ2 asks whether the gain comes from inter-agent coordination rather than from stronger per-agent capacity. Finally, given that coordination is preserved, RQ3 asks whether the joint architecture still recovers a few-step inference budget.

{\color{orange}\textbf{RQ1.}} \textbf{How well does CoFlow perform on continuous and discrete MARL benchmarks?}

{\color{orange}\textbf{RQ2.}} \textbf{Is the reward gain genuinely routed through CVA, not through stronger per-agent capacity?}

{\color{orange}\textbf{RQ3.}} \textbf{When does single-pass inference suffice, and which design choice drives it?}

\subsection{Setup}
\label{sec:exp_setup}

\noindent\textbf{Benchmarks.} We evaluate on three widely used MARL testbeds: \textbf{MPE} \citep{mordatch2018emergence} on the cooperative Spread / Tag / World scenarios with Expert / Medium / Medium-Replay / Random offline datasets from \citet{pan2022plan}; \textbf{MA-MuJoCo} \citep{peng2021facmac} on 2$\times$Ant and 4$\times$Ant with Good / Medium / Poor splits from the off-the-grid benchmark \citep{formanek2023otg}; and \textbf{SMAC} \citep{samvelyan2019starcraft} on 3m / 2s3z / 5m\_vs\_6m / 8m with the same off-the-grid splits. We follow the codebase and evaluation protocol of MADiff \citep{zhu2024madiff}; full per-environment descriptions are in \Cref{app:experiments}.

\subsection{Performance (RQ1)}
\label{sec:exp_performance}

\Cref{tab:mpe,tab:smac,tab:mamujoco} report episodic returns on MPE, SMAC, and MA-MuJoCo (best per row in \textbf{bold}, second best \underline{underlined}). \textbf{CoFlow-C is the top method on the vast majority of cells across all three benchmarks}, with the most striking margins on MPE Spread, where it improves on the best baseline by several times. \textbf{CoFlow-D is on average weaker than CoFlow-C}, as expected from dropping the global residual under decentralized execution, but it still matches or exceeds the strongest baseline on most cells; on several Medium/Replay and $4\times$Ant splits, the two perform comparably or CoFlow-D slightly edges out CoFlow-C. The few CoFlow-C losses concentrate on the same pattern of \emph{low-coverage Poor splits with low-dimensional action spaces} (8m-Poor, 2s3z-Poor, $2\times$Ant-Poor), where value-based methods such as MA-ICQ and MA-TD3+BC benefit from conservative pessimism on a narrow action distribution. The step sweep in \S\ref{sec:exp_fewstep} confirms that additional denoising does not close this gap, so the limit there is data coverage rather than inference steps. The strong wins on coordination-heavy splits raise an obvious follow-up: \emph{is this gain truly routed through CVA, or could a value-based or capacity-rich per-agent generator achieve the same?} We address this in \S\ref{sec:coordination_evidence}.

\textbf{Baselines.} We compare against four families. \emph{Gaussian/value-based}: behavior cloning (BC) \citep{pomerleau1988alvinn}; MA-ICQ and MA-CQL \citep{yang2021believe}, which adapt implicit constraint Q-learning (ICQ) and conservative Q-learning (CQL) \citep{kumar2020conservative} to the multi-agent setting; MA-TD3+BC \citep{fujimoto2021minimalist, pan2022plan}; and OMAR \citep{pan2022plan}. \emph{Transformer-based}: multi-agent decision transformer (MADT) \citep{kurenkov2022multi}. \emph{Diffusion-based}: MADiff \citep{zhu2024madiff} with ${\sim}20$ denoising steps, and the stitching variant MADiTS \citep{yuan2025madits}. \emph{Flow-based}: DoF \citep{li2025dof}. MAC-Flow \citep{lee2025macflow} and OM2P \citep{fan2025om2p} are closely related one-step flow methods, but their reported results do not match all of our benchmark splits and evaluation conventions; we therefore discuss them as related baselines rather than mixing non-comparable numbers in the main tables.

\textbf{CoFlow variants and protocol.} The default CoFlow uses the consistency-regularized loss of \Cref{alg:training}. The variant \emph{CoFlow-base} replaces this with plain averaged-velocity regression on the same architecture, serving as the closest analog to existing one-step flow methods without the consistency surrogate. Each variant is run in both execution modes, centralized C and decentralized D as defined in \S\ref{sec:method}, yielding four configurations: CoFlow-C, CoFlow-D, CoFlow-base-C, CoFlow-base-D. All numbers, ours and baselines, are reported as mean $\pm$ std over 3 seeds, with curves additionally shaded by $\pm 1$ standard error of the mean. Baseline values are taken verbatim from the respective papers. For CoFlow-C/D, the step count is chosen from $\{1,\dots,5\}$ per configuration using training-time reward and frozen across the 3 seeds, matching the per-task step tuning used by MADiff and MADiTS. Per-step raw results are deferred to \Cref{app:experiments}.

\begin{figure*}[!tbp]
\centering
\includegraphics[width=0.95\textwidth]{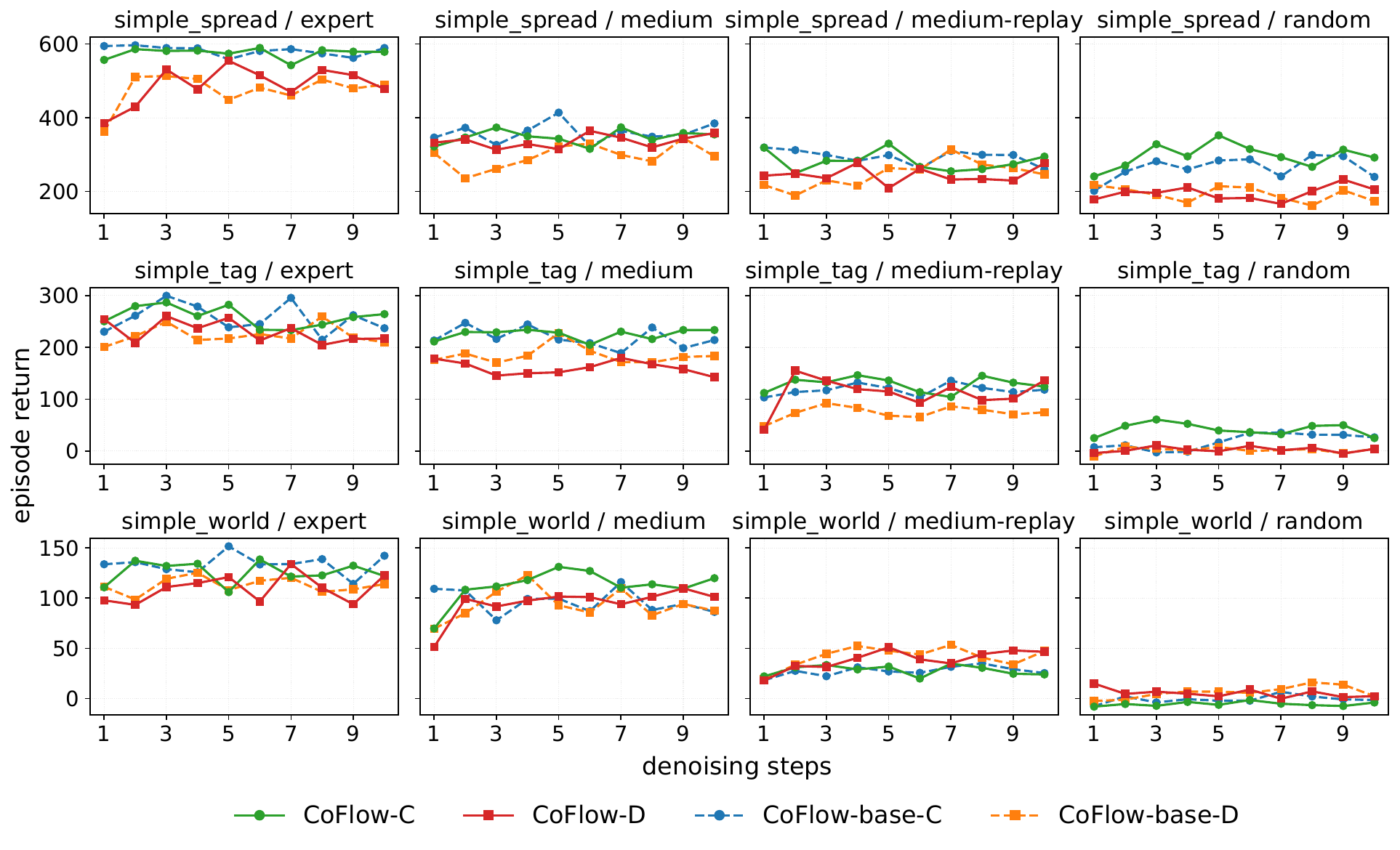}
\caption{Reward versus denoising steps on MPE, with 12 subplots spanning 3 tasks and 4 dataset qualities. Each subplot overlays the four CoFlow variants along the denoising-step range $\{1,\dots,10\}$. Variants saturate by 2--3 steps on most qualities, so few-step inference recovers near-best performance. SMAC and MA-MuJoCo counterparts are in \Cref{fig:step_smac,fig:step_mamujoco}.}
\label{fig:step_mpe}
\end{figure*}

\begin{figure*}[!tbp]
\centering
\includegraphics[width=\textwidth]{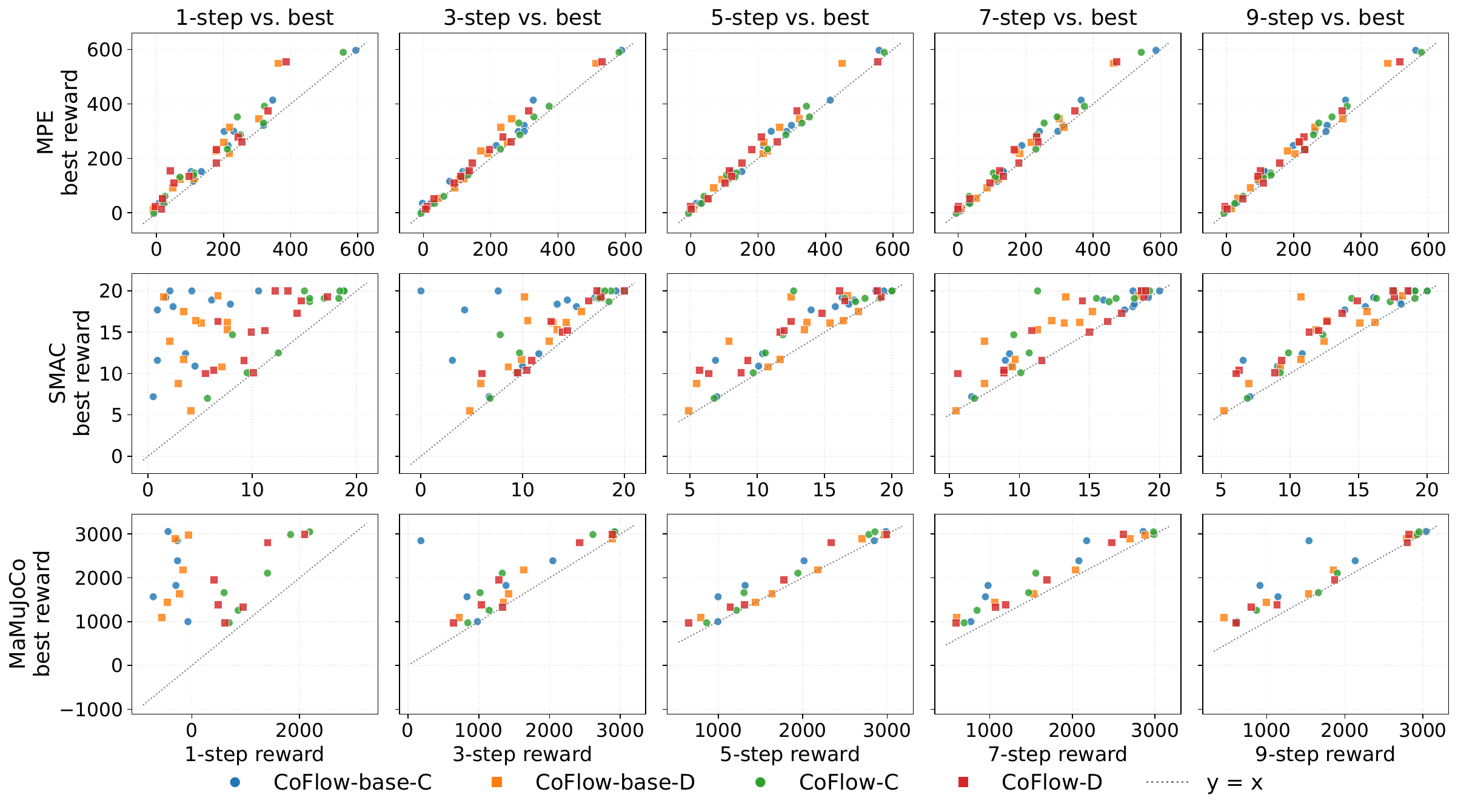}
\caption{Per-configuration scatter for odd denoising budgets $k \in \{1, 3, 5, 7, 9\}$ across all four CoFlow variants on every (env, task, quality) cell. Horizontal axis: reward at $k$-step inference; vertical axis: peak reward observed during training. Points on the diagonal $y{=}x$ mean $k$-step inference matches the best-observed reward. Even-$k$ in \Cref{fig:kstep_vs_best_even}.}
\label{fig:kstep_vs_best}
\end{figure*}

\subsection{Coordination preservation (RQ2)}
\label{sec:coordination_evidence}

We first ask whether CVA actually functions as the coordination mechanism. We sweep its contribution by a gating factor $\alpha \in [0, 1]$, with $\alpha{=}0$ disabling it and $\alpha{=}1$ using it fully. \Cref{fig:coordination_evidence} reports three signals: episode reward rises monotonically in $\alpha$ on every MPE task (panel a); so does landmark coverage rate, the fraction of landmarks covered by at least one agent at each timestep (panel b); and disabling CVA outright degrades performance on every task, with the gap scaling with task coordination intensity (panel c). Coverage is a coordination signal that reward alone can hide, since at low $\alpha$ the agents cluster on the same landmark, the signature failure of independent generators on identical inputs. Together, the three signals rule out stronger per-agent behavior as the explanation: the gain flows through inter-agent attention.

We then ask what CVA has learned. The attention weight matrices in \Cref{fig:attention_weights} differ across tasks without role or topology priors: \textbf{MPE-Spread} (symmetric coverage) shows uniform mixing; \textbf{MPE-Tag} concentrates attention among the three predators and only weakly on the prey, routing coordination through shared planning rather than shared perception; and \textbf{SMAC-2s3z} produces a block-diagonal Stalker/Zealot grouping that recovers role-based partitioning without explicit labels. CVA thus operates as \emph{selective} inter-agent information sharing. The per-layer gate $\gamma_l$, reported in \Cref{fig:gamma_signs} of the appendix, also stays small across all tested tasks ($\max_l |\gamma_l| \le 0.15$), placing the deployed models inside the small-gating regime where the tightened bound of \Cref{thm:decomposition} applies.

\subsection{Few-step sufficiency (RQ3)}
\label{sec:exp_fewstep}

After the coordination probes, RQ3 asks whether the joint architecture and consistency surrogate recover the 1--3-step inference budget needed to compete with per-agent fast samplers. We sweep the denoising step count from 1 to 10 for all four CoFlow variants on every configuration; \Cref{fig:step_mpe} reports MPE, with SMAC and MA-MuJoCo in \Cref{fig:step_smac,fig:step_mamujoco}. Three findings emerge. (i)~CoFlow effectively becomes a one-step generator: on all three benchmarks, the $k{=}1$ reward is usually within seed noise of the best $k \in \{1,\dots,10\}$. (ii)~CoFlow-D usually needs at most one extra step to match CoFlow-C; the asymmetric \texttt{5m\_vs\_6m}-Good split is the main exception, where decentralized execution reaches $14.0\pm5.0$ within the 1--5 step budget and peaks at $15.0$ over the 1--10 sweep. (iii)~CoFlow-base, which removes the consistency surrogate but keeps CVA, shows a clear $k{=}1$ undershoot that closes only around $k \approx 4$, with the largest gap on MA-MuJoCo where the velocity field is least smooth. The surrogate therefore compresses multi-step refinement into a single forward pass at multi-agent scale. Except for the visible late-step case \texttt{8m}-Medium CoFlow-C, which reaches $17.5\pm3.9$ within the 1--5 table budget and peaks at $19.1$ in the 1--10 diagnostic sweep, most configurations plateau by five steps, matching the single-pass error bound in \Cref{thm:one_step_error} under the small-gating regime verified in \S\ref{sec:coordination_evidence}.

\textbf{Per-configuration view.} \Cref{fig:kstep_vs_best} verifies that the aggregate trend does not mask individual behavior: most CoFlow-C and CoFlow-D points cluster on the $y{=}x$ diagonal already at $k{=}1$, while the \texttt{5m\_vs\_6m}-Good CoFlow-D point sits below the diagonal and moves closer only at larger $k$. CoFlow-base points show the broader pattern expected from plain regression: they sit visibly below at $k{=}1$ and migrate onto the diagonal only by $k{=}5$. The even-$k$ counterpart in \Cref{fig:kstep_vs_best_even} of the appendix shows the same picture. Few-step inference is therefore a per-configuration property of CoFlow rather than a benchmark-average artefact, again consistent with the per-configuration error bound of \Cref{thm:one_step_error}.

\begin{figure*}[!tbp]
\centering
\includegraphics[width=\textwidth]{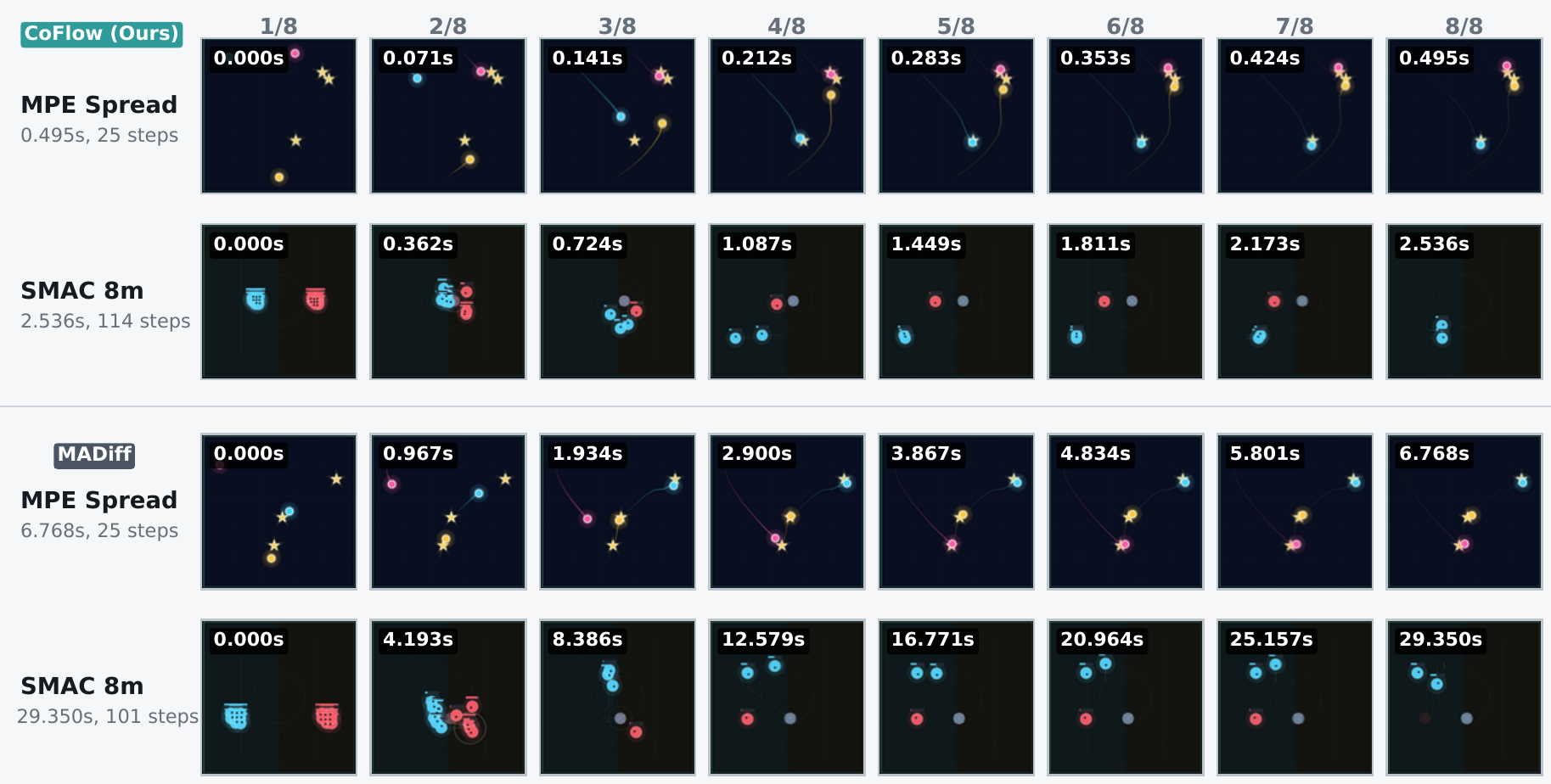}
\caption{Actual-rollout model-time keyframes for CoFlow (top block) and MADiff (bottom block) on MPE Spread Expert and SMAC \texttt{8m} Good. Each row shows eight uniformly spaced frames from a rollout executed by the corresponding algorithm. Frame labels report accumulated model-inference wall-clock time along that rollout, computed as the measured per-decision model sampling latency multiplied by the actual rollout step index. CoFlow and MADiff use separate rollouts for the same task, and the comparison isolates model sampling time from simulator and rendering overhead.}
\label{fig:model_time_keyframes_main}
\end{figure*}

The few-step reward curves and scatter plots show that one-pass generation is sufficient for quality; \Cref{fig:model_time_keyframes_main} shows the practical runtime consequence on an RTX 3090. Because CoFlow uses a single model call per decision while MADiff uses 15 DDIM denoising calls, the accumulated model-only time over a full rollout remains below three seconds on the two representative tasks shown here, while MADiff requires several to tens of seconds under the same timing convention.

\section{Limitations and Future Work}
\label{sec:limitations}

Several directions remain open. \textbf{(1)}~For tightly coupled continuous control, adaptive-step generation is promising: a budget-aware criterion could decide when multiple steps are needed rather than relying on a fixed schedule. \textbf{(2)}~CVA scales well to 8 agents, but evaluation beyond tens of agents remains future work. \textbf{(3)}~Generalization beyond CVA to communication-, graph-, or value-decomposition-based coordination architectures is also unverified. \textbf{(4)}~Across most configurations CoFlow-D trails CoFlow-C by a small margin. Closing the gap algorithmically (e.g., through communication-augmented CVA or per-agent calibration of the masked attention) is left to future work.

\section{Conclusion}

We presented CoFlow, which closes the coordination--efficiency trade-off in offline multi-agent generation by embedding inter-agent coupling directly into the averaged velocity field via Coordinated Velocity Attention with adaptive gating, and by making consistency-regularized training feasible at multi-agent scale through a finite-difference surrogate. The Joint Velocity Decomposition Theorem certifies that the cross-agent correction is bounded by directly measurable architectural quantities, and sixty configurations across MPE, MA-MuJoCo, and SMAC confirm that single-pass inference preserves this coordination under both centralized and decentralized execution. Multi-agent coordination can therefore live in the model rather than in the sampling loop.

\section*{Impact Statement}

This paper presents work whose goal is to advance the field of Machine Learning. There are many potential societal consequences of our work, none of which we feel must be specifically highlighted here.

\bibliographystyle{plainnat}
\bibliography{main}

\begin{thebibliography}{55}
\providecommand{\natexlab}[1]{#1}
\providecommand{\url}[1]{\texttt{#1}}
\expandafter\ifx\csname urlstyle\endcsname\relax
  \providecommand{\doi}[1]{doi: #1}\else
  \providecommand{\doi}{doi: \begingroup \urlstyle{rm}\Url}\fi

\bibitem[Bernstein et~al.(2002)Bernstein, Givan, Immerman, and
  Zilberstein]{bernstein2002complexity}
Daniel~S. Bernstein, Robert Givan, Neil Immerman, and Shlomo Zilberstein.
\newblock The complexity of decentralized control of {Markov} decision
  processes.
\newblock \emph{Mathematics of Operations Research}, 27\penalty0 (4):\penalty0
  819--840, 2002.

\bibitem[Chen et~al.(2023)Chen, Lu, Ying, Su, and Zhu]{chen2023sfbc}
Huayu Chen, Cheng Lu, Chengyang Ying, Hang Su, and Jun Zhu.
\newblock Offline reinforcement learning via high-fidelity generative behavior
  modeling.
\newblock In \emph{International Conference on Learning Representations}, 2023.

\bibitem[Chi et~al.(2023)Chi, Feng, Du, Xu, Cousineau, Burchfiel, and
  Song]{chi2023diffusion}
Cheng Chi, Siyuan Feng, Yilun Du, Zhenjia Xu, Eric Cousineau, Benjamin
  Burchfiel, and Shuran Song.
\newblock Diffusion policy: Visuomotor policy learning via action diffusion.
\newblock In \emph{Robotics: Science and Systems}, 2023.

\bibitem[Dao et~al.(2024)Dao, Phung, Nguyen, and Tran]{dao2024latent}
Quan Dao, Hao Phung, Binh Nguyen, and Anh Tran.
\newblock Flow matching in latent space.
\newblock In \emph{International Conference on Learning Representations}, 2024.

\bibitem[Fan et~al.(2025)Fan, Li, Li, and Zhang]{fan2025om2p}
Xiaolong Fan, Haozheng Li, Yongming Li, and Wei Zhang.
\newblock {OM2P}: Offline multi-agent mean-flow policy, 2025.
\newblock URL \url{https://arxiv.org/abs/2508.06269}.

\bibitem[Formanek et~al.(2023)Formanek, Jeewa, Shock, and
  Pretorius]{formanek2023otg}
Claude Formanek, Asad Jeewa, Jonathan Shock, and Arnu Pretorius.
\newblock Off-the-grid {MARL}: Datasets and baselines for offline multi-agent
  reinforcement learning.
\newblock In \emph{AAMAS}, 2023.

\bibitem[Formanek et~al.(2024)Formanek, Tilbury, Beyers, Shock, and
  Pretorius]{formanek2024dispelling}
Claude Formanek, Callum~R. Tilbury, Louise Beyers, Jonathan Shock, and Arnu
  Pretorius.
\newblock Dispelling the mirage of progress in offline {MARL} through
  standardised baselines and evaluation.
\newblock In \emph{NeurIPS Datasets and Benchmarks Track}, 2024.

\bibitem[Frans et~al.(2025)Frans, Hafner, Levine, and
  Abbeel]{frans2025shortcut}
Kevin Frans, Danijar Hafner, Sergey Levine, and Pieter Abbeel.
\newblock One step diffusion via shortcut models.
\newblock In \emph{International Conference on Learning Representations}, 2025.

\bibitem[Fujimoto and Gu(2021)]{fujimoto2021minimalist}
Scott Fujimoto and Shixiang~Shane Gu.
\newblock A minimalist approach to offline reinforcement learning.
\newblock In \emph{Advances in Neural Information Processing Systems},
  volume~34, pages 20132--20145, 2021.

\bibitem[Gao et~al.(2025)Gao, Yan, Lezama, Fei, Ge, Sohn, Yoon, Lu, and
  Li]{gao2025avgvelocity}
Shanghua Gao, Junting Yan, Jose Lezama, Hao Fei, Yong Ge, Kihyuk Sohn,
  Irfan~Essa Yoon, Jianming Lu, and Liangliang Li.
\newblock Meanflow: One-step flow matching through mean velocity, 2025.
\newblock URL \url{https://arxiv.org/abs/2504.13712}.

\bibitem[Gat et~al.(2024)Gat, Remez, Shaul, Kreuk, Chen, Synnaeve, Adi, and
  Lipman]{gat2024discrete}
Itai Gat, Tal Remez, Neta Shaul, Felix Kreuk, Ricky T.~Q. Chen, Gabriel
  Synnaeve, Yossi Adi, and Yaron Lipman.
\newblock Discrete flow matching.
\newblock In \emph{Advances in Neural Information Processing Systems}, 2024.

\bibitem[Geng et~al.(2025{\natexlab{a}})Geng, Deng, Bai, Kolter, and
  He]{geng2025avgvelocity}
Zhengyang Geng, Mingyang Deng, Xingjian Bai, J.~Zico Kolter, and Kaiming He.
\newblock Mean flows for one-step generative modeling, 2025{\natexlab{a}}.
\newblock URL \url{https://arxiv.org/abs/2505.13447}.

\bibitem[Geng et~al.(2025{\natexlab{b}})Geng, Lu, Wu, Shechtman, Kolter, and
  He]{geng2025fastforward}
Zhengyang Geng, Yiyang Lu, Zongze Wu, Eli Shechtman, J.~Zico Kolter, and
  Kaiming He.
\newblock Improved mean flows: On the challenges of fastforward generative
  models, 2025{\natexlab{b}}.
\newblock URL \url{https://arxiv.org/abs/2512.02012}.

\bibitem[Guo et~al.(2025)Guo, Wang, Yuan, Cao, Chen, Chen, Huo, Zhang, Wang,
  Liu, and Wang]{guo2025intervalsplitting}
Yi~Guo, Wei Wang, Zhihang Yuan, Rong Cao, Kuan Chen, Zhengyang Chen, Yuanyuan
  Huo, Yang Zhang, Yuping Wang, Shouda Liu, and Yuxuan Wang.
\newblock {SplitMeanFlow}: Interval splitting consistency in few-step
  generative modeling, 2025.
\newblock URL \url{https://arxiv.org/abs/2507.16884}.

\bibitem[Hansen-Estruch et~al.(2023)Hansen-Estruch, Kostrikov, Janner, Kuba,
  and Levine]{hansen2023idql}
Philippe Hansen-Estruch, Ilya Kostrikov, Michael Janner, Jakub~Grudzien Kuba,
  and Sergey Levine.
\newblock {IDQL}: Implicit {Q}-learning as an actor-critic method with
  diffusion policies, 2023.
\newblock URL \url{https://arxiv.org/abs/2304.10573}.

\bibitem[Ho et~al.(2020)Ho, Jain, and Abbeel]{ho2020ddpm}
Jonathan Ho, Ajay Jain, and Pieter Abbeel.
\newblock Denoising diffusion probabilistic models.
\newblock In \emph{Advances in Neural Information Processing Systems},
  volume~33, pages 6840--6851, 2020.

\bibitem[Janner et~al.(2022)Janner, Du, Tenenbaum, and
  Levine]{janner2022planning}
Michael Janner, Yilun Du, Joshua~B Tenenbaum, and Sergey Levine.
\newblock Planning with diffusion for flexible behavior synthesis.
\newblock In \emph{International Conference on Machine Learning}, pages
  9902--9915, 2022.

\bibitem[Kostrikov et~al.(2022)Kostrikov, Nair, and Levine]{kostrikov2022iql}
Ilya Kostrikov, Ashvin Nair, and Sergey Levine.
\newblock Offline reinforcement learning with implicit {Q}-learning.
\newblock In \emph{International Conference on Learning Representations}, 2022.

\bibitem[Kraemer and Banerjee(2016)]{kraemer2016multi}
Landon Kraemer and Bikramjit Banerjee.
\newblock Multi-agent reinforcement learning as a rehearsal for decentralized
  planning.
\newblock \emph{Neurocomputing}, 190:\penalty0 82--94, 2016.

\bibitem[Kumar et~al.(2020)Kumar, Zhou, Tucker, and
  Levine]{kumar2020conservative}
Aviral Kumar, Aurick Zhou, George Tucker, and Sergey Levine.
\newblock Conservative q-learning for offline reinforcement learning.
\newblock In \emph{Advances in Neural Information Processing Systems},
  volume~33, pages 1179--1191, 2020.

\bibitem[Kurenkov et~al.(2022)Kurenkov, Mandlekar, Mart{\'\i}n-Mart{\'\i}n,
  Savarese, and Garg]{kurenkov2022multi}
Andrei Kurenkov, Ajay Mandlekar, Roberto Mart{\'\i}n-Mart{\'\i}n, Silvio
  Savarese, and Animesh Garg.
\newblock Multi-agent decision transformer, 2022.
\newblock URL \url{https://arxiv.org/abs/2203.13691}.

\bibitem[Lee et~al.(2026)Lee, Lee, and Zhang]{lee2025macflow}
Dongsu Lee, Daehee Lee, and Amy Zhang.
\newblock Multi-agent coordination via flow matching.
\newblock In \emph{International Conference on Learning Representations}, 2026.

\bibitem[Levine et~al.(2020)Levine, Kumar, Tucker, and Fu]{levine2020offline}
Sergey Levine, Aviral Kumar, George Tucker, and Justin Fu.
\newblock Offline reinforcement learning: Tutorial, review, and perspectives on
  open problems, 2020.
\newblock URL \url{https://arxiv.org/abs/2005.01643}.

\bibitem[Li et~al.(2025)Li, Fan, and Li]{li2025dof}
Haozheng Li, Xiaolong Fan, and Yongming Li.
\newblock {DoF}: A diffusion factorization framework for offline multi-agent
  reinforcement learning.
\newblock In \emph{International Conference on Learning Representations}, 2025.

\bibitem[Lipman et~al.(2023)Lipman, Chen, Ben-Hamu, and Nickel]{lipman2023flow}
Yaron Lipman, Ricky T.~Q. Chen, Heli Ben-Hamu, and Maximilian Nickel.
\newblock Flow matching for generative modeling.
\newblock In \emph{International Conference on Learning Representations}, 2023.

\bibitem[Liu et~al.(2023)Liu, Gong, and Liu]{liu2023flow}
Xingchao Liu, Chengyue Gong, and Qiang Liu.
\newblock Flow straight and fast: Learning to generate and transfer data with
  rectified flow, 2023.
\newblock URL \url{https://arxiv.org/abs/2209.03003}.

\bibitem[Liu et~al.(2025)Liu, Lin, Yu, Wu, Liang, Li, and Ding]{liu2025inspo}
Zongkai Liu, Qian Lin, Chao Yu, Xiawei Wu, Yile Liang, Donghui Li, and Xuetao
  Ding.
\newblock {InSPO}: Offline multi-agent reinforcement learning via in-sample
  sequential policy optimization.
\newblock In \emph{AAAI Conference on Artificial Intelligence}, 2025.

\bibitem[Lowe et~al.(2017)Lowe, Wu, Tamar, Harb, Abbeel, and
  Mordatch]{lowe2017multi}
Ryan Lowe, Yi~Wu, Aviv Tamar, Jean Harb, Pieter Abbeel, and Igor Mordatch.
\newblock Multi-agent actor-critic for mixed cooperative-competitive
  environments.
\newblock In \emph{Advances in Neural Information Processing Systems},
  volume~30, 2017.

\bibitem[Lu et~al.(2024)Lu, Chen, Chen, Su, Li, and Zhu]{lu2024contrastive}
Cheng Lu, Huayu Chen, Jianfei Chen, Hang Su, Chongxuan Li, and Jun Zhu.
\newblock Contrastive energy prediction for exact energy-guided diffusion
  sampling in offline reinforcement learning.
\newblock In \emph{International Conference on Machine Learning}, 2024.

\bibitem[Lu et~al.(2025)Lu, Han, Shen, and Li]{lu2025diffusion_veteran}
Haofei Lu, Dongqi Han, Yifei Shen, and Dongsheng Li.
\newblock What makes a good diffusion planner for decision making?
\newblock In \emph{International Conference on Learning Representations}, 2025.
\newblock Spotlight.

\bibitem[McAllister et~al.(2026)McAllister, Ge, Yi, Kim, Weber, Choi, Feng, and
  Kanazawa]{mcallister2026fpo}
David McAllister, Songwei Ge, Brent Yi, Chung~Min Kim, Ethan Weber, Hongsuk
  Choi, Haiwen Feng, and Angjoo Kanazawa.
\newblock Flow matching policy gradients.
\newblock In \emph{International Conference on Learning Representations}, 2026.

\bibitem[Mordatch and Abbeel(2018)]{mordatch2018emergence}
Igor Mordatch and Pieter Abbeel.
\newblock Emergence of grounded compositional language in multi-agent
  populations.
\newblock In \emph{AAAI Conference on Artificial Intelligence}, volume~32,
  2018.

\bibitem[Pan et~al.(2022)Pan, Huang, Ma, and Xu]{pan2022plan}
Ling Pan, Longbo Huang, Tengyu Ma, and Huazhe Xu.
\newblock Plan better amid conservatism: Offline multi-agent reinforcement
  learning with actor rectification.
\newblock In \emph{International Conference on Machine Learning}, pages
  17221--17237, 2022.

\bibitem[Park et~al.(2025)Park, Li, and Levine]{park2025flowql}
Seohong Park, Qiyang Li, and Sergey Levine.
\newblock Flow {Q}-learning.
\newblock In \emph{International Conference on Machine Learning}, 2025.

\bibitem[Peng et~al.(2021)Peng, Rashid, Schroeder~de Witt, Kamienny, Torr,
  B{\"o}hmer, and Whiteson]{peng2021facmac}
Bei Peng, Tabish Rashid, Christian Schroeder~de Witt, Pierre-Alexandre
  Kamienny, Philip Torr, Wendelin B{\"o}hmer, and Shimon Whiteson.
\newblock {FACMAC}: Factored multi-agent centralised policy gradients.
\newblock In \emph{Advances in Neural Information Processing Systems},
  volume~34, pages 12208--12221, 2021.

\bibitem[Pomerleau(1988)]{pomerleau1988alvinn}
Dean~A. Pomerleau.
\newblock {ALVINN}: An autonomous land vehicle in a neural network.
\newblock In \emph{Advances in Neural Information Processing Systems},
  volume~1, 1988.

\bibitem[Qiao et~al.(2025)Qiao, Li, Yang, Zha, and Wang]{qiao2025omsd}
Dan Qiao, Wenhao Li, Shanchao Yang, Hongyuan Zha, and Baoxiang Wang.
\newblock {OMSD}: Offline multi-agent reinforcement learning via score
  decomposition.
\newblock In \emph{International Conference on Learning Representations}, 2025.

\bibitem[Ren et~al.(2025)Ren, Lidard, Ankile, Simeonov, Agrawal, Majumdar,
  Burchfiel, Dai, and Simchowitz]{ren2024dppo}
Allen~Z. Ren, Justin Lidard, Lars~L. Ankile, Anthony Simeonov, Pulkit Agrawal,
  Anirudha Majumdar, Benjamin Burchfiel, Hongkai Dai, and Max Simchowitz.
\newblock Diffusion policy policy optimization.
\newblock In \emph{International Conference on Learning Representations}, 2025.

\bibitem[Rombach et~al.(2022)Rombach, Blattmann, Lorenz, Esser, and
  Ommer]{rombach2022latent}
Robin Rombach, Andreas Blattmann, Dominik Lorenz, Patrick Esser, and Bj{\"o}rn
  Ommer.
\newblock High-resolution image synthesis with latent diffusion models.
\newblock In \emph{Proceedings of the IEEE/CVF Conference on Computer Vision
  and Pattern Recognition}, pages 10684--10695, 2022.

\bibitem[Salimans and Ho(2022)]{salimans2022progressive}
Tim Salimans and Jonathan Ho.
\newblock Progressive distillation for fast sampling of diffusion models.
\newblock In \emph{International Conference on Learning Representations}, 2022.

\bibitem[Samvelyan et~al.(2019)Samvelyan, Rashid, Schroeder~de Witt, Farquhar,
  Foerster, Rudner, Hung, Torr, Sukhbaatar, and
  Whiteson]{samvelyan2019starcraft}
Mikayel Samvelyan, Tabish Rashid, Christian Schroeder~de Witt, Gregory
  Farquhar, Jakob Foerster, Tim Rudner, Chia-Man Hung, Philip H.~S. Torr,
  Sainbayar Sukhbaatar, and Shimon Whiteson.
\newblock The {StarCraft} multi-agent challenge, 2019.
\newblock URL \url{https://arxiv.org/abs/1902.04043}.

\bibitem[Song et~al.(2021{\natexlab{a}})Song, Meng, and Ermon]{song2020ddim}
Jiaming Song, Chenlin Meng, and Stefano Ermon.
\newblock Denoising diffusion implicit models.
\newblock In \emph{International Conference on Learning Representations},
  2021{\natexlab{a}}.

\bibitem[Song et~al.(2021{\natexlab{b}})Song, Sohl-Dickstein, Kingma, Kumar,
  Ermon, and Poole]{song2021score}
Yang Song, Jascha Sohl-Dickstein, Diederik~P Kingma, Abhishek Kumar, Stefano
  Ermon, and Ben Poole.
\newblock Score-based generative modeling through stochastic differential
  equations.
\newblock In \emph{International Conference on Learning Representations},
  2021{\natexlab{b}}.

\bibitem[Song et~al.(2023)Song, Dhariwal, Chen, and
  Sutskever]{song2023consistency}
Yang Song, Prafulla Dhariwal, Mark Chen, and Ilya Sutskever.
\newblock Consistency models.
\newblock In \emph{International Conference on Machine Learning}, pages
  32211--32252, 2023.

\bibitem[Vaswani et~al.(2017)Vaswani, Shazeer, Parmar, Uszkoreit, Jones, Gomez,
  Kaiser, and Polosukhin]{vaswani2017attention}
Ashish Vaswani, Noam Shazeer, Niki Parmar, Jakob Uszkoreit, Llion Jones,
  Aidan~N Gomez, {\L}ukasz Kaiser, and Illia Polosukhin.
\newblock Attention is all you need.
\newblock In \emph{Advances in Neural Information Processing Systems}, 2017.

\bibitem[Wang et~al.(2023{\natexlab{a}})Wang, Xu, Zheng, and
  Zhan]{wang2023omiga}
Xiangsen Wang, Haoran Xu, Yinan Zheng, and Xianyuan Zhan.
\newblock Offline multi-agent reinforcement learning with implicit
  global-to-local value regularization.
\newblock In \emph{Advances in Neural Information Processing Systems},
  volume~36, 2023{\natexlab{a}}.

\bibitem[Wang et~al.(2023{\natexlab{b}})Wang, Hunt, and
  Zhou]{wang2023diffusion}
Zhendong Wang, Jonathan~J Hunt, and Mingyuan Zhou.
\newblock Diffusion policies as an expressive policy class for offline
  reinforcement learning.
\newblock In \emph{International Conference on Learning Representations},
  2023{\natexlab{b}}.

\bibitem[Yang et~al.(2021)Yang, Ma, Li, Zheng, Zhang, Huang, Yang, and
  Zhao]{yang2021believe}
Yiqin Yang, Xiaoteng Ma, Chenghao Li, Zewu Zheng, Qiyuan Zhang, Gao Huang, Jun
  Yang, and Qianchuan Zhao.
\newblock Believe what you see: Implicit constraint approach for offline
  multi-agent reinforcement learning.
\newblock In \emph{Advances in Neural Information Processing Systems},
  volume~34, pages 10299--10312, 2021.

\bibitem[Yuan et~al.(2025)Yuan, Bian, Li, Zhang, Guan, and Yu]{yuan2025madits}
Lei Yuan, Yuqi Bian, Lihe Li, Ziqian Zhang, Cong Guan, and Yang Yu.
\newblock {MADiTS}: Efficient multi-agent offline coordination via
  diffusion-based trajectory stitching.
\newblock In \emph{International Conference on Learning Representations}, 2025.

\bibitem[Zhan et~al.(2025)Zhan, Fujimoto, Zhu, Lee, Jiang, and
  Efroni]{zhan2025exploiting}
Wenhao Zhan, Scott Fujimoto, Zheqing Zhu, Jason~D. Lee, Daniel~R. Jiang, and
  Yonathan Efroni.
\newblock Exploiting structure in offline multi-agent {RL}: The benefits of low
  interaction rank.
\newblock In \emph{International Conference on Learning Representations}, 2025.

\bibitem[Zhang et~al.(2025{\natexlab{a}})Zhang, Zhang, and Gu]{zhang2025efm}
Shiyuan Zhang, Weitong Zhang, and Quanquan Gu.
\newblock Energy-weighted flow matching for offline reinforcement learning.
\newblock In \emph{International Conference on Learning Representations},
  2025{\natexlab{a}}.

\bibitem[Zhang et~al.(2025{\natexlab{b}})Zhang, Yu, Su, and
  Wang]{zhang2025reinflow}
Tonghe Zhang, Chao Yu, Sichang Su, and Yu~Wang.
\newblock {ReinFlow}: Fine-tuning flow matching policy with online
  reinforcement learning.
\newblock In \emph{Advances in Neural Information Processing Systems},
  2025{\natexlab{b}}.

\bibitem[Zhu et~al.(2024)Zhu, Liu, Mao, Kang, Xu, Yu, Ermon, and
  Zhang]{zhu2024madiff}
Zhengbang Zhu, Minghuan Liu, Liyuan Mao, Bingyi Kang, Minkai Xu, Yong Yu,
  Stefano Ermon, and Weinan Zhang.
\newblock {MADiff}: Offline multi-agent learning with diffusion models.
\newblock In \emph{Advances in Neural Information Processing Systems},
  volume~37, pages 4177--4206, 2024.

\bibitem[Zou et~al.(2025)Zou, Wang, Wu, Qian, Wang, and Li]{zou2025dispersive}
Guowei Zou, Haitao Wang, Hejun Wu, Yukun Qian, Yuhang Wang, and Weibing Li.
\newblock {DM1}: {MeanFlow} with dispersive regularization for 1-step robotic
  manipulation, 2025.
\newblock URL \url{https://arxiv.org/abs/2510.07865}.

\bibitem[Zou et~al.(2026)Zou, Wang, Wu, Qian, Wang, and Li]{zou2026stepenough}
Guowei Zou, Haitao Wang, Hejun Wu, Yukun Qian, Yuhang Wang, and Weibing Li.
\newblock One step is enough: Dispersive {MeanFlow} policy optimization, 2026.
\newblock URL \url{https://arxiv.org/abs/2601.20701}.

\end{thebibliography}

\newpage
\appendix

\section{Experimental Details and Results}
\label{app:experiments}

This appendix is organized from experimental protocol to supplementary evidence. We first describe the model setup and benchmark datasets, then provide the full 120-configuration summary, ablations, additional few-step analyses, and coordination diagnostics.

\subsection{Experimental Setup}

\Cref{fig:coflow_diagram} illustrates the CTDE paradigm in CoFlow. During training, the model accesses all agents' observations and learns coordinated trajectory generation. During execution, each agent queries the model with only its local observation history. The CVA mechanism learned during centralized training enables implicit coordination without direct communication.

\begin{figure}[!htbp]
\centering
\includegraphics[width=\textwidth]{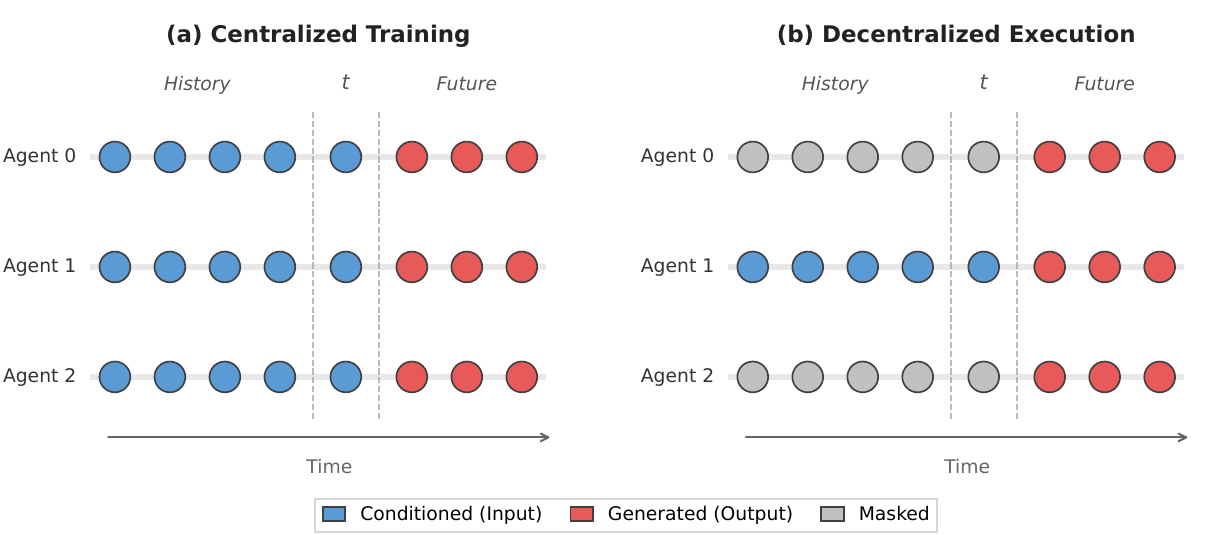}
\caption{CoFlow's CTDE architecture. Panel \textbf{(a)}, Centralized Training, shows the model accessing all agents' observations in blue and generating coordinated trajectories in red. Panel \textbf{(b)}, Decentralized Execution, shows each agent observing only its own history while the others are masked out in gray. Coordination is preserved through the CVA patterns learned during centralized training.}
\label{fig:coflow_diagram}
\end{figure}

\textbf{Network architecture.} CoFlow employs a U-Net style architecture augmented with cross-agent attention mechanisms to capture inter-agent dependencies while maintaining computational efficiency.

\textbf{Encoder-Decoder Structure.} The backbone network follows an encoder-decoder design with skip connections between corresponding layers. Each encoder and decoder block consists of 1D convolutional layers with kernel size 5, followed by group normalization with 8 groups and Mish activation. The encoder progressively downsamples the temporal dimension using strided convolutions, while the decoder upsamples using transposed convolutions. Dimension multipliers of [1, 4, 8] control the channel expansion at each resolution level, starting from a base dimension of 128.

\textbf{Coordinated Velocity Attention (CVA).} To enable coordination across agents, we integrate CVA modules at each encoder layer. Given the skip features $c^i_l \in \mathbb{R}^{T_l \times F_l}$ for agent $i$ at layer $l$, we compute query, key, and value projections through linear transformations. The attention mechanism uses 4 heads and operates across the agent dimension, allowing each agent to attend to all other agents' representations. We employ Adaptive Coordination Gating with a learnable scaling parameter $\gamma$ initialized to zero, which stabilizes training by allowing the network to progressively incorporate inter-agent coordination from the independent-agent baseline.

\textbf{Time Conditioning.} The averaged velocity formulation requires conditioning on both the current time $t$ and reference time $r$. We embed these scalar values using sinusoidal positional encodings and inject them into each network layer via Feature-wise Linear Modulation (FiLM) conditioning. This allows the network to adapt its predictions based on the temporal context within the flow trajectory.

\textbf{Inverse Dynamics Model.} For action extraction, we employ a shared 3-layer MLP with hidden dimension 256 and Mish activations. The model takes consecutive observation pairs $(o_t, o_{t+1})$ as input and predicts the corresponding action $a_t$. Sharing the inverse dynamics model across agents reduces the parameter count while leveraging commonalities in the action space structure.

\textbf{Hyperparameters.} \Cref{tab:hyperparameters} presents the hyperparameters used for each environment.

\begin{table}[!htbp]
\caption{Hyperparameter settings across the three benchmark environments.}
\label{tab:hyperparameters}
\centering
\small
\begin{tabular}{lccc}
\toprule
Hyperparameter & MPE & MA-MuJoCo & SMAC \\
\midrule
\multicolumn{4}{l}{\textit{Model Architecture}} \\
Network dimension & 128 & 128 & 128 \\
Hidden dimension & 256 & 256 & 256 \\
Dimension multipliers & [1, 4, 8] & [1, 4, 8] & [1, 4, 8] \\
Attention heads & 4 & 4 & 4 \\
Residual attention & \checkmark & \checkmark & \checkmark \\
\midrule
\multicolumn{4}{l}{\textit{Flow Parameters}} \\
Inference steps & 1 & 1 & 1 \\
Flow ratio $\rho$ & 0.5 & 0.5 & 0.5 \\
Adaptive loss $\gamma$ & 0.5 & 0.5 & 0.5 \\
Stability constant $c$ & 0.001 & 0.001 & 0.001 \\
\midrule
\multicolumn{4}{l}{\textit{Training}} \\
Batch size & 32 & 32 & 32 \\
Learning rate & 2e-4 & 2e-4 & 2e-4 \\
Gradient accumulation & 2 & 2 & 2 \\
EMA decay & 0.995 & 0.995 & 0.995 \\
Training steps & 1M & 500K & 500K \\
\midrule
\multicolumn{4}{l}{\textit{Conditioning}} \\
Condition dropout & 0.25 & 0.25 & 0.25 \\
Guidance weight $\omega$ & 1.2 & 1.2 & 1.2 \\
Returns condition & \checkmark & \checkmark & \checkmark \\
\midrule
\multicolumn{4}{l}{\textit{Environment-Specific}} \\
Horizon $H$ & 24 & 10 & 10 \\
History horizon $C$ & 0 & 18 & 0 \\
Action weighting coefficient $w_a$ & 10 & 10 & 10 \\
Discount $\gamma$ & 0.99 & 0.99 & 0.99 \\
\bottomrule
\end{tabular}
\end{table}

\subsection{Benchmarks and Offline Datasets}

\textbf{Datasets.} We evaluate on three benchmark suites. \textit{Multi-Agent Particle Environment (MPE)}: three cooperative scenarios. In \textit{Spread}, three agents must cover three landmarks without colliding. Implicit coordination is needed to decide which agent covers which landmark. \textit{Tag} is a predator-prey scenario. Three predators must cooperate to catch a faster pre-trained prey. \textit{World} extends pursuit to a more complex environment with obstacles such as forests that affect visibility and movement. Each scenario uses four dataset quality levels. Expert is collected from fully trained policies. Medium-Replay comes from the replay buffer during training. Medium is from partially trained policies. Random comes from uniformly random actions.

\textbf{Multi-Agent MuJoCo (MA-MuJoCo).} These tasks decompose standard MuJoCo locomotion environments into multi-agent control problems. The 2$\times$Ant and 4$\times$Ant tasks factorize the Ant robot into 2 or 4 agents, each controlling a subset of joints. This decomposition creates challenging coordination problems where agents must synchronize their actions to maintain balance and forward progress. Datasets are collected using MA-TD3 policies at different training stages, labeled as Good for fully converged, Medium for partially trained, and Poor for early training.

\textbf{StarCraft Multi-Agent Challenge (SMAC).} SMAC provides micromanagement scenarios in StarCraft II with partial observability and discrete action spaces. We evaluate on symmetric battles including 3m with 3 Marines vs 3 Marines and 8m with 8 Marines vs 8 Marines, heterogeneous unit compositions such as 2s3z with 2 Stalkers and 3 Zealots, and asymmetric scenarios such as 5m\_vs\_6m with 5 Marines vs 6 Marines. These tasks require agents to coordinate focus-fire strategies, positioning, and ability usage under fog-of-war conditions.

\textbf{Dataset return distributions.} \Cref{fig:violin_dataset} shows violin plots of the offline dataset returns, using per-agent average for consistent comparison across tasks with different agent counts.

\begin{figure}[!htbp]
\centering
\includegraphics[width=\textwidth]{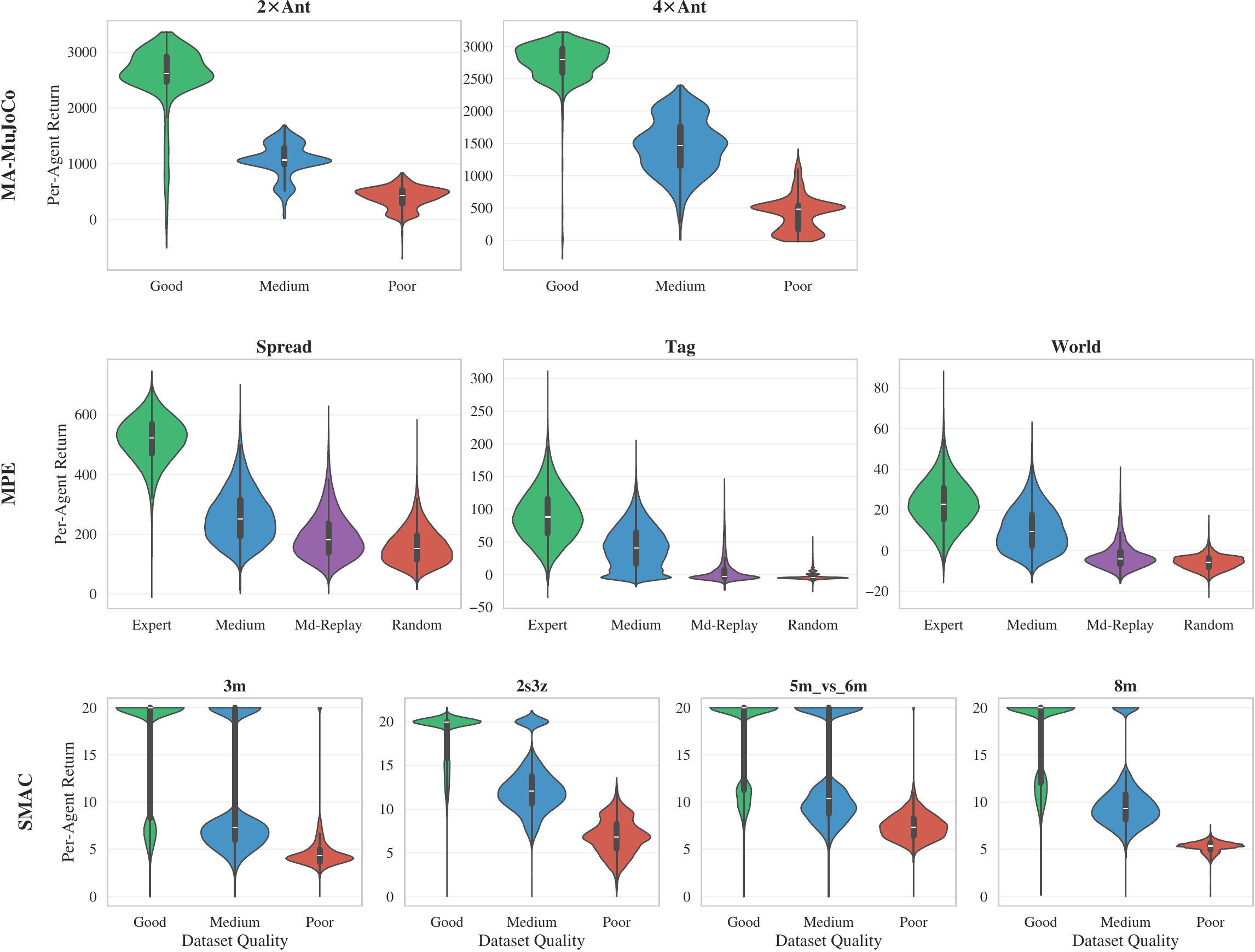}
\caption{Per-agent average return distributions of the offline datasets. All values are per-agent averages for consistent comparison across tasks with different agent counts.}
\label{fig:violin_dataset}
\end{figure}

\subsection{Inference Timing Details}

\FloatBarrier

\textbf{Actual-rollout model-time keyframes.} \Cref{fig:model_time_keyframes} visualizes the practical inference-time gap between CoFlow and MADiff on the four representative appendix tasks. Each row shows eight uniformly spaced frames from a rollout executed by the corresponding algorithm, with separate rollouts used for CoFlow and MADiff within each task. The overlaid timestamps show accumulated model-inference time. The totals are computed as measured per-decision model sampling latency multiplied by the actual rollout length, separating model speed from simulator/rendering overhead.

\begin{figure*}[!htbp]
\centering
\includegraphics[width=\textwidth]{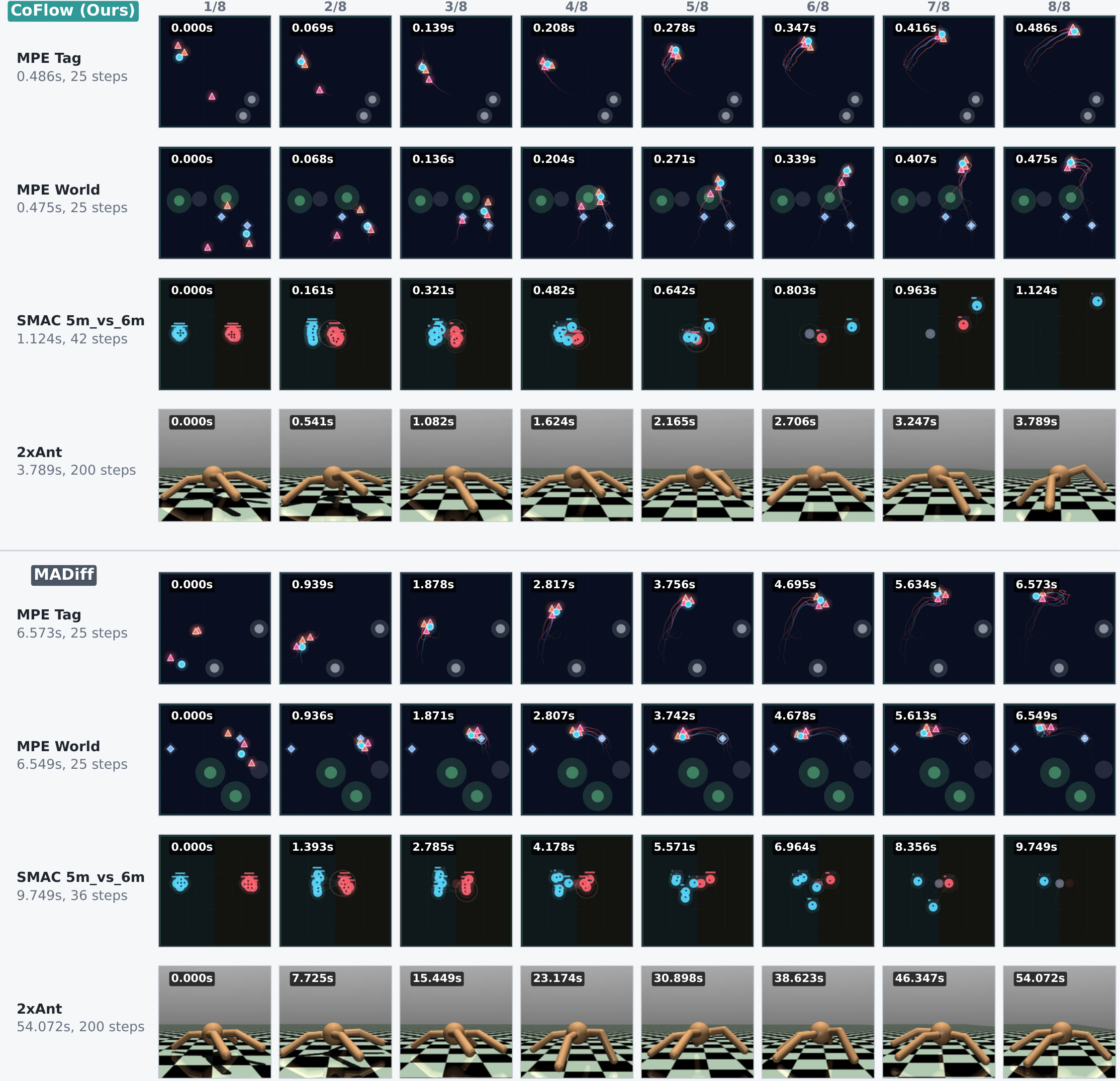}
\caption{Actual-rollout model-time keyframes for CoFlow (top block) and MADiff (bottom block). Rows cover MPE Tag Expert, MPE World Expert, SMAC \texttt{5m\_vs\_6m} Good, and MA-MuJoCo $2\times$Ant Good. CoFlow and MADiff use separate rollouts for the same task; frame labels report accumulated model-inference wall-clock time under the measured per-decision sampling latency of the corresponding method.}
\label{fig:model_time_keyframes}
\end{figure*}
\FloatBarrier

\textbf{Complete 30-task model-inference timing.} \Cref{tab:model_inference_time} reports the corresponding model-only timing comparison for all 30 benchmark task--quality pairs used in Tables~\ref{tab:mpe}--\ref{tab:smac}. The table uses the same full-rollout convention as \Cref{fig:model_time_keyframes}: per-decision model sampling latency multiplied by the actual rollout length. CoFlow uses single-step sampling, while MADiff uses 15-step DDIM sampling.

\begin{table*}[!htbp]
\caption{Model-inference timing comparison between CoFlow and MADiff across all 30 task--quality pairs. Per-call latency is measured during online evaluation on the same GPU. Full-rollout model time multiplies that per-call latency by the rollout length shown in the table, excluding simulator, rendering, and checkpoint setup overhead.}
\label{tab:model_inference_time}
\centering
\scriptsize
\setlength{\tabcolsep}{3.0pt}
\renewcommand{\arraystretch}{1.03}
\resizebox{\textwidth}{!}{%
\begin{tabular}{lllrrrrrr}
\toprule
Suite & Task & Quality & Steps & \multicolumn{2}{c}{Per-call latency (ms)} & \multicolumn{2}{c}{Full-rollout model time (s)} & Speedup \\
\cmidrule(lr){5-6}\cmidrule(lr){7-8}
 & & & & CoFlow & MADiff & CoFlow & MADiff & MADiff/CoFlow \\
\midrule
\multirow{6}{*}{MA-MuJoCo} & \multirow{3}{*}{$2\times$Ant} & Poor & 200 & 18.2 & 260.1 & 3.639 & 52.029 & 14.30$\times$ \\
 & & Medium & 200 & 18.7 & 262.1 & 3.742 & 52.411 & 14.00$\times$ \\
 & & Good & 200 & 18.9 & 270.4 & 3.789 & 54.072 & 14.27$\times$ \\
\cmidrule(lr){2-9}
 & \multirow{3}{*}{$4\times$Ant} & Poor & 200 & 19.9 & 256.4 & 3.987 & 51.273 & 12.86$\times$ \\
 & & Medium & 200 & 22.9 & 262.8 & 4.572 & 52.554 & 11.50$\times$ \\
 & & Good & 200 & 18.2 & 268.2 & 3.650 & 53.642 & 14.70$\times$ \\
\midrule
\multirow{12}{*}{MPE} & \multirow{4}{*}{Spread} & Expert & 25 & 19.8 & 270.7 & 0.495 & 6.768 & 13.68$\times$ \\
 & & Med.-Replay & 25 & 19.2 & 273.8 & 0.481 & 6.844 & 14.23$\times$ \\
 & & Medium & 25 & 20.2 & 264.2 & 0.506 & 6.606 & 13.06$\times$ \\
 & & Random & 25 & 20.3 & 264.8 & 0.508 & 6.619 & 13.03$\times$ \\
\cmidrule(lr){2-9}
 & \multirow{4}{*}{Tag} & Expert & 25 & 19.4 & 262.9 & 0.486 & 6.573 & 13.53$\times$ \\
 & & Med.-Replay & 25 & 19.6 & 259.1 & 0.491 & 6.478 & 13.19$\times$ \\
 & & Medium & 25 & 19.4 & 262.3 & 0.485 & 6.558 & 13.51$\times$ \\
 & & Random & 25 & 19.0 & 262.6 & 0.476 & 6.564 & 13.80$\times$ \\
\cmidrule(lr){2-9}
 & \multirow{4}{*}{World} & Expert & 25 & 19.0 & 262.0 & 0.475 & 6.549 & 13.79$\times$ \\
 & & Med.-Replay & 25 & 19.2 & 260.8 & 0.481 & 6.519 & 13.56$\times$ \\
 & & Medium & 25 & 18.9 & 262.0 & 0.472 & 6.550 & 13.88$\times$ \\
 & & Random & 25 & 19.7 & 274.8 & 0.492 & 6.869 & 13.95$\times$ \\
\midrule
\multirow{12}{*}{SMAC} & \multirow{3}{*}{\texttt{3m}} & Poor & 60 & 20.2 & 275.1 & 1.215 & 16.509 & 13.59$\times$ \\
 & & Medium & 58 & 26.5 & 275.1 & 1.540 & 15.958 & 10.37$\times$ \\
 & & Good & 35 & 19.7 & 277.9 & 0.689 & 9.726 & 14.11$\times$ \\
\cmidrule(lr){2-9}
 & \multirow{3}{*}{\texttt{2s3z}} & Poor & 120 & 21.2 & 281.6 & 2.546 & 33.795 & 13.27$\times$ \\
 & & Medium & 98 & 19.9 & 273.3 & 1.953 & 26.784 & 13.71$\times$ \\
 & & Good & 116 & 21.1 & 270.1 & 2.449 & 31.328 & 12.79$\times$ \\
\cmidrule(lr){2-9}
 & \multirow{3}{*}{\texttt{5m\_vs\_6m}} & Poor & 70 & 22.7 & 284.7 & 1.588 & 19.926 & 12.55$\times$ \\
 & & Medium & 39 & 24.5 & 282.0 & 0.956 & 10.999 & 11.50$\times$ \\
 & & Good & 42 & 26.8 & 270.8 & 1.124 & 11.374 & 10.12$\times$ \\
\cmidrule(lr){2-9}
 & \multirow{3}{*}{\texttt{8m}} & Poor & 120 & 24.5 & 301.4 & 2.939 & 36.170 & 12.31$\times$ \\
 & & Medium & 116 & 29.7 & 289.0 & 3.449 & 33.521 & 9.72$\times$ \\
 & & Good & 114 & 22.2 & 290.6 & 2.536 & 33.128 & 13.07$\times$ \\
\bottomrule
\end{tabular}%
}
\vspace{2pt}
{\footnotesize MADiff checkpoints are available for two MA-MuJoCo rows; the remaining MADiff timing rows use randomly initialized same-shape networks because timing depends on the sampling graph and tensor shapes rather than learned weights.}
\end{table*}
\FloatBarrier

\noindent\textbf{Timing takeaway.} CoFlow is consistently faster across all 30 task--quality pairs: the average and median model-sampling speedups are $13.07\times$ and $13.52\times$, with per-call latencies of 18.2--29.7 ms versus 256.4--301.4 ms for MADiff. After multiplying by rollout length, full-rollout model time remains 0.47--4.57 s for CoFlow and 6.48--54.07 s for MADiff. End-to-end rollout gaps can be smaller because simulator and preprocessing overheads are method-independent; \Cref{tab:model_inference_time} isolates the deploy-time model-sampling cost.

\subsection{Full Performance Summary}

\textbf{One-view summary of all 120 configurations.} \Cref{fig:heatmap} compresses every reward value from the main experiments into a single heatmap. The horizontal axis sweeps the denoising-step budget over $\{1, \dots, 10\}$. Each row is one of the 120 configurations. Cells are coloured by their reward as a fraction of the row's peak, with non-negative rows anchored at zero: red means the cell is at or near the row peak, blue means a substantial gap to the peak. This avoids the per-row min-max stretching that would otherwise paint already-saturated rows with the full color range. Rows are grouped by environment with thick black separators, by task with medium gray separators, and by dataset quality with thin gray separators. Within each quality group the four CoFlow variants are stacked in the fixed order CoFlow-C, CoFlow-D, CoFlow-base-C, CoFlow-base-D.

\begin{figure}[!htbp]
\centering
\includegraphics[width=0.72\columnwidth]{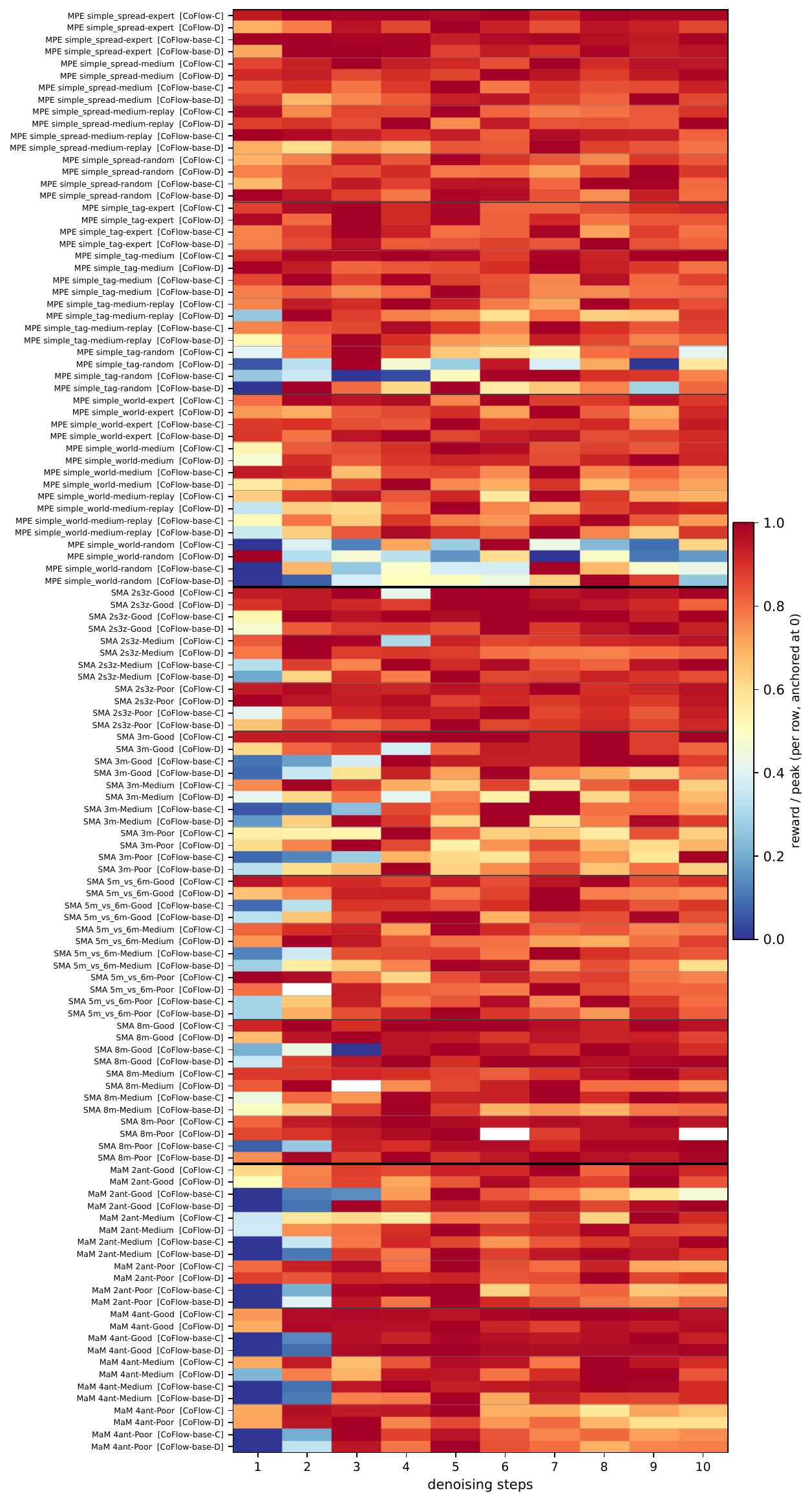}
\caption{Per-configuration heatmap of reward as a fraction of the row peak across denoising steps, with non-negative rows anchored at zero. Layout and colour scale are described in the surrounding text.}
\label{fig:heatmap}
\end{figure}

Three patterns emerge from the heatmap. First, the CoFlow rows, the 1st and 2nd of each stacked group of four, are usually red from step one onward. Reward is near-ceiling already at single-pass inference. Second, the CoFlow-base rows, the 3rd and 4th of each group, transition from blue at low steps to red at higher steps. This is consistent with the error decomposition in \Cref{thm:one_step_error}: plain regression leaves a larger low-step approximation gap, and extra denoising can reduce it. Third, the main CoFlow exceptions are localized rather than systematic: \texttt{5m\_vs\_6m}-Good CoFlow-D is not a failed run, rising from $9.9$ at one step to $14.0\pm5.0$ within the 1--5 step budget and peaking at $15.0$ over 1--10 steps, while \texttt{8m}-Medium CoFlow-C shows a later diagnostic peak of $19.1$ at 9 steps after reaching $17.5\pm3.9$ within the table budget. We attribute the former gap to the asymmetric team sizes specific to \texttt{5m\_vs\_6m}, where 5 marines face 6 enemies. Under the decentralized attention mask, each agent's local observation excludes one teammate at any moment. On the Good-quality split, coordination plays the largest role, so the masked local view makes the per-agent velocity field harder to refine. The Medium/Poor splits are less affected; see \Cref{tab:smac}.

\subsection{Ablations}

\textbf{Effect of history horizon.} The history horizon $C$ determines how many past observations are provided as context to the model. As shown in \Cref{tab:history_ablation}, incorporating historical context significantly improves performance on MA-MuJoCo tasks, where temporal dependencies play a crucial role in locomotion control. Performance improves consistently as we increase the history length from 0 to 18 steps, with the optimal value at $C=18$ achieving a 28\% improvement over the no-history baseline. Beyond this point, performance slightly decreases, likely due to the increased difficulty of modeling very long temporal dependencies. Based on these results, we use $C=18$ for all MA-MuJoCo experiments, while MPE and SMAC tasks use $C=0$ as they benefit less from extended history.

\begin{table}[!htbp]
\caption{Ablation on the history horizon $C$, run on MA-MuJoCo 4$\times$Ant Medium.}
\label{tab:history_ablation}
\centering
\small
\renewcommand{\arraystretch}{1.1}
\begin{tabular}{cc}
\toprule
History Horizon $C$ & Episodic Return \\
\midrule
0  & $1850 \pm 45$ \\
6  & $2012 \pm 38$ \\
12 & $2198 \pm 42$ \\
18 & $\mathbf{2373 \pm 15}$ \\
24 & $2315 \pm 28$ \\
\bottomrule
\end{tabular}

\vspace{2pt}
{\footnotesize The best result is in \textbf{bold}. $C{=}0$ corresponds to no history conditioning. Returns are reported as mean $\pm$ std over 3 seeds.}
\end{table}

\textbf{Effect of guidance weight.} The classifier-free guidance weight $\omega$ controls the strength of return conditioning during inference. \Cref{tab:guidance_ablation} presents the effect of varying $\omega$ on MPE-Spread with expert data. Setting $\omega=1.0$ corresponds to standard conditional generation without guidance amplification. Increasing to $\omega=1.2$ yields the best performance, improving returns by 12\% over the unguided baseline. However, further increasing the guidance weight to 1.5 or 2.0 leads to performance degradation, as overly strong guidance can push the generated trajectories away from the learned data distribution. We therefore use $\omega=1.2$ as the default across all experiments.

\begin{table}[!htbp]
\caption{Ablation on the classifier-free guidance weight $\omega$, run on MPE-Spread Expert.}
\label{tab:guidance_ablation}
\centering
\small
\renewcommand{\arraystretch}{1.1}
\begin{tabular}{cc}
\toprule
Guidance Weight $\omega$ & Episodic Return \\
\midrule
1.0 & $520.3 \pm 15.2$ \\
1.2 & $\mathbf{585.1 \pm 1.9}$ \\
1.5 & $572.8 \pm 8.4$ \\
2.0 & $548.6 \pm 12.1$ \\
\bottomrule
\end{tabular}

\vspace{2pt}
{\footnotesize The best result is in \textbf{bold}. $\omega{=}1.0$ corresponds to standard conditional generation without guidance amplification. Returns are reported as mean $\pm$ std over 3 seeds.}
\end{table}

\subsection{Additional Few-Step Results}
\label{app:inference_efficiency}

\textbf{Per-environment reward-vs-step curves on SMAC and MA-MuJoCo.} \Cref{fig:step_smac,fig:step_mamujoco} complement the main-text MPE curves of \Cref{fig:step_mpe}. They report the absolute episodic reward as a function of the denoising-step budget for each task and dataset quality. The pattern matches MPE in aggregate. CoFlow variants usually saturate within 1--3 denoising steps, with \texttt{8m}-Medium CoFlow-C as a visible SMAC late-step exception. CoFlow-base variants need 3--5 steps to reach the same plateau. The largest gap is on MA-MuJoCo, whose velocity field has the highest effective Lipschitz constant.

\begin{figure*}[!htbp]
\centering
\includegraphics[width=0.95\textwidth]{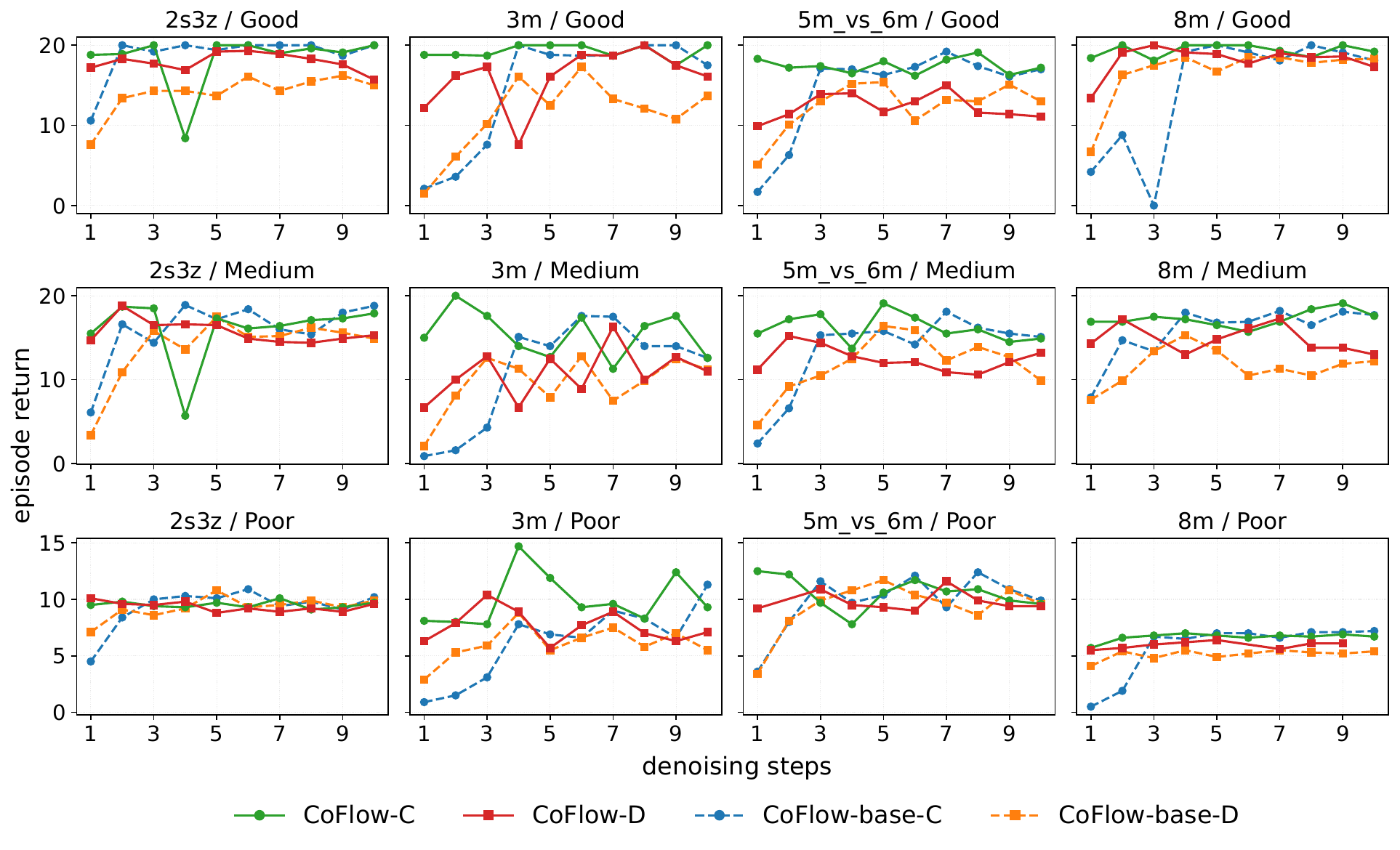}
\caption{Reward versus denoising steps on SMAC, with 12 subplots spanning 3 dataset qualities and 4 tasks. Each subplot overlays the four CoFlow variants along the denoising-step range $\{1,\dots,10\}$. CoFlow variants reach their maximum within 1--2 steps on most tasks, with a late peak on \texttt{8m}-Medium CoFlow-C; CoFlow-base climbs through 4--5 steps before plateauing.}
\label{fig:step_smac}
\end{figure*}

\begin{figure*}[!htbp]
\centering
\includegraphics[width=0.82\textwidth]{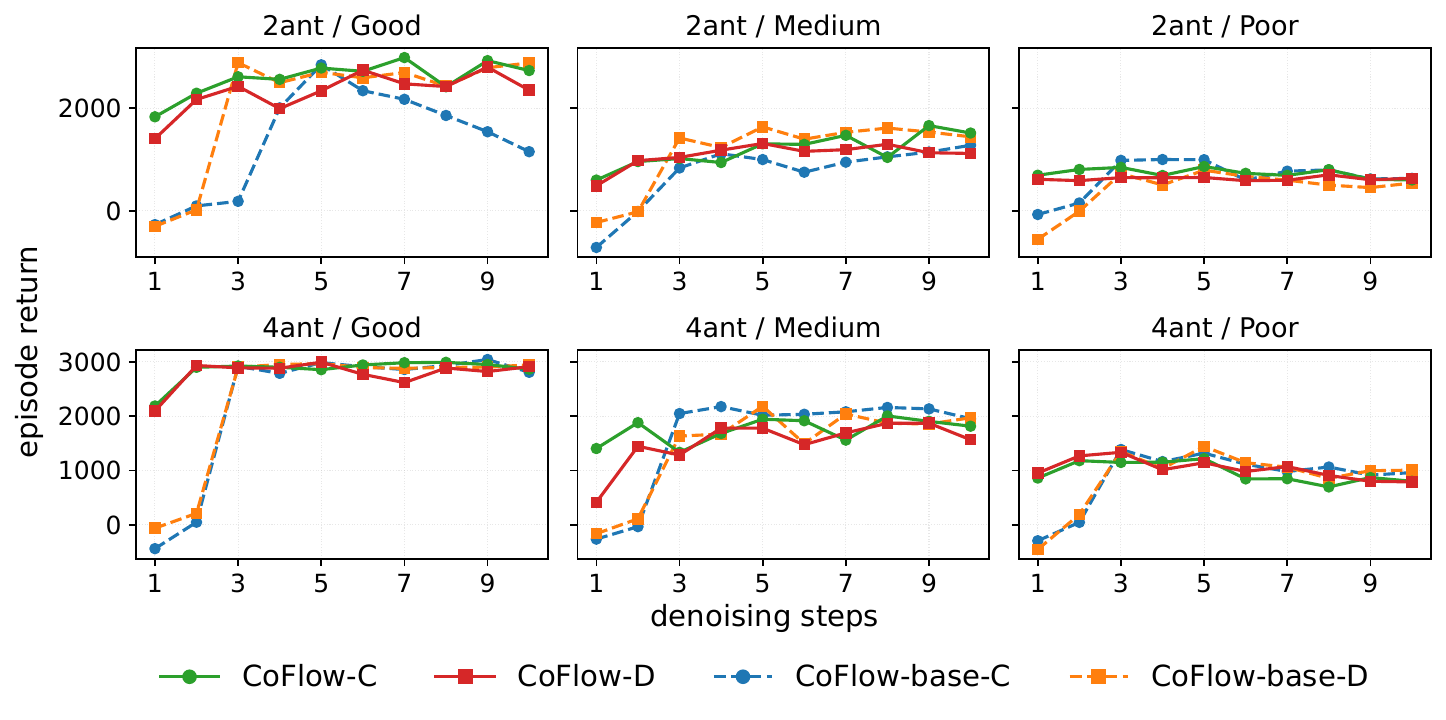}
\caption{Reward versus denoising steps on MA-MuJoCo, with 6 subplots spanning 2 tasks and 3 dataset qualities. Each subplot overlays the four CoFlow variants along the denoising-step range $\{1,\dots,10\}$. CoFlow variants start near their peak already at a single step, while CoFlow-base variants require 3--4 steps to close the gap. Continuous locomotion thus exposes a clear multi-step benefit for CoFlow-base but not for CoFlow.}
\label{fig:step_mamujoco}
\end{figure*}

\textbf{Even-$k$ companion to Figure~\ref{fig:kstep_vs_best}.} The main-text scatter reports odd $k$ only; \Cref{fig:kstep_vs_best_even} below shows the complementary even budgets $k \in \{2, 4, 6, 8, 10\}$. The qualitative patterns are identical: CoFlow sits on the diagonal throughout, CoFlow-base collapses onto it by $k{=}4$, and both saturate for $k \geq 6$.

\begin{figure*}[!htbp]
\centering
\includegraphics[width=\textwidth]{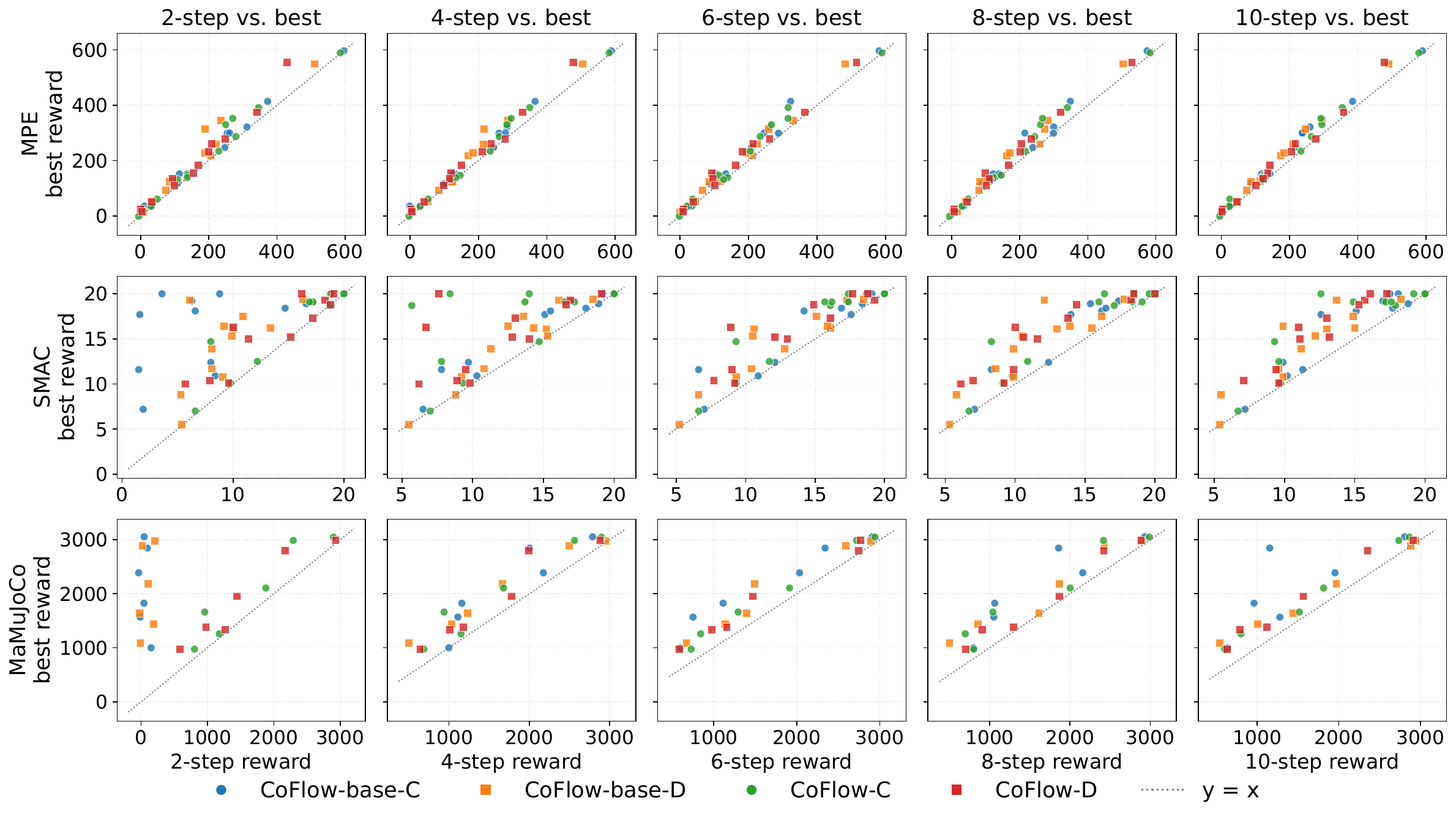}
\caption{Companion to \Cref{fig:kstep_vs_best} for even denoising budgets $k \in \{2, 4, 6, 8, 10\}$. Axes and legend as in the main-text figure.}
\label{fig:kstep_vs_best_even}
\end{figure*}

\textbf{Steps to reach the offline dataset mean.} We provide an alternative view that uses each configuration's offline dataset mean as the reference: for each configuration we report the smallest denoising-step index at which the variant's reward first reaches $\ge N\%$ of the dataset mean. \Cref{fig:steps_to_dataset_mean} sweeps $N \in \{50, 70, 90, 100, 110, 115, 120, 125, 130\}$, covering the broad range from modest catch-up through matching the dataset to clearly exceeding it. For configurations with non-positive dataset means, namely Tag-Random and World-Random in MPE, the $N\%$ scaling is ill-defined, so we use the dataset mean itself as the threshold: a variant qualifies if its reward is at least the dataset mean. These configurations are kept rather than dropped. Configurations whose reward never reaches a threshold within 10 steps are dropped from that panel's histogram, so the per-column $n$ shrinks as $N$ grows. At the 100\% threshold that matches the dataset, CoFlow-C and CoFlow-base-C qualify on all 30 configurations, CoFlow-D on 29/30, and CoFlow-base-D on 27/30. Raising the threshold to 130\%, where variants must clearly exceed the dataset, drops these counts to 23/30 for CoFlow-C, 19/30 for CoFlow-D, 22/30 for CoFlow-base-C, and 21/30 for CoFlow-base-D. The picture is consistent with the per-configuration scatter in \Cref{fig:kstep_vs_best}: CoFlow's single-pass advantage is robust to the choice of comparison target.

\begin{figure*}[!htbp]
\centering
\includegraphics[width=\textwidth]{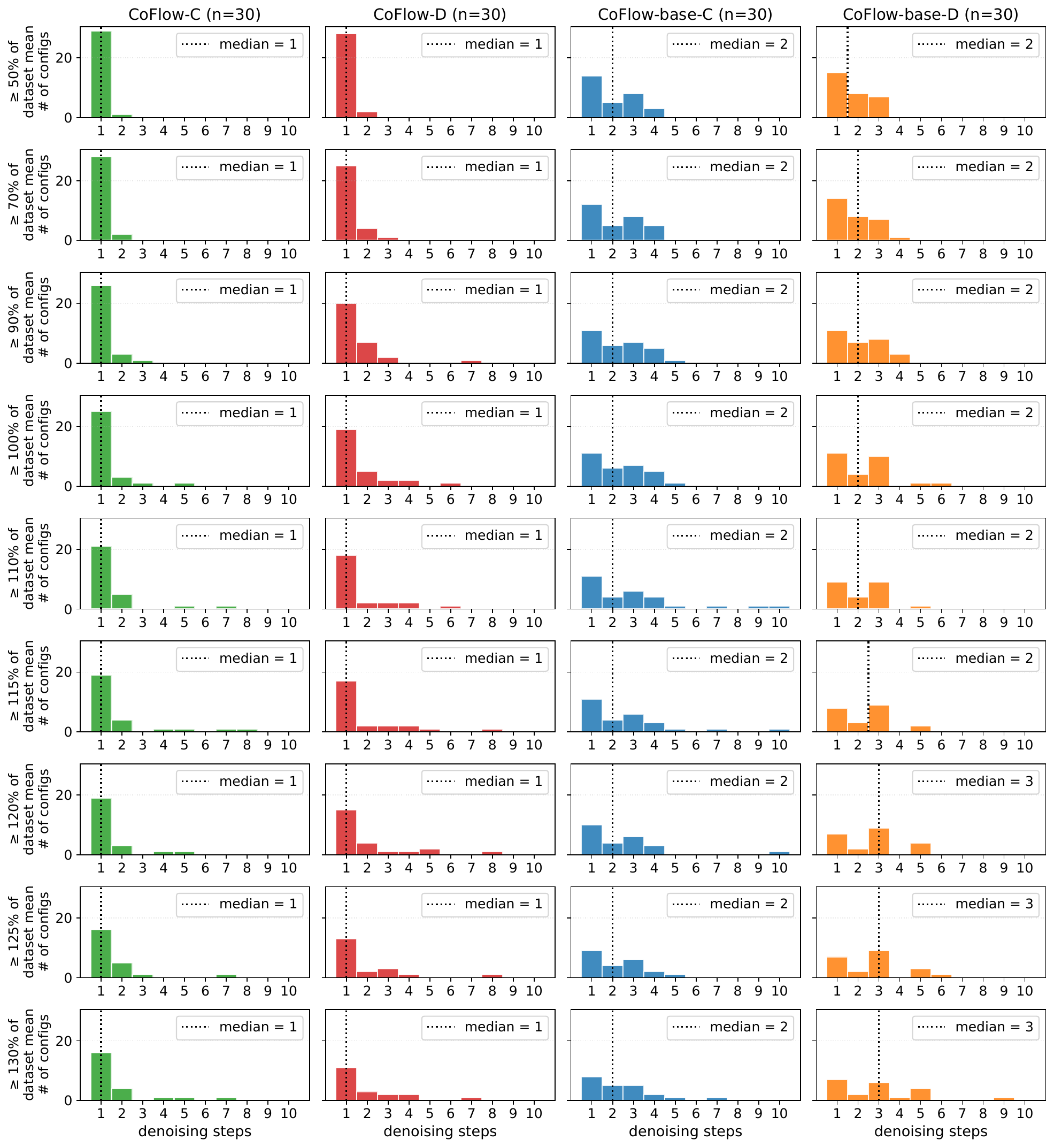}
\caption{Steps needed to reach $\ge N\%$ of the offline dataset mean. Rows sweep the threshold over $N \in \{50, 70, 90, 100, 110, 115, 120, 125, 130\}$; columns are the four CoFlow variants CoFlow-C, CoFlow-D, CoFlow-base-C, and CoFlow-base-D. Dashed vertical lines mark the per-panel medians. Configurations whose reward never reaches the threshold within 10 steps are not shown, so the per-column $n$ in the title shrinks as $N$ grows.}
\label{fig:steps_to_dataset_mean}
\end{figure*}

\subsection{Additional Coordination Analysis}

\textbf{Per-layer learned coordination gating.} \Cref{fig:gamma_signs} reports the learned $\gamma_l$ scale for each CVA layer on five representative MPE configurations. The values back the main-text claim that $\gamma_l$ flips sign across tasks and depth: on the cooperative coverage tasks Spread Expert and Spread Medium, the early layers $l{=}0,1$ adopt positive gating while $l{=}2$ becomes negative; on the predator-prey pursuit tasks Tag Expert and Tag Medium the pattern inverts, with $l{=}0,1$ negative and $l{=}2$ positive. World, which mixes coverage and pursuit with obstacles, settles on a uniformly negative early-layer profile. All active CVA layers satisfy $|\gamma_l| \le 0.15$, while the deepest layer remains at $\gamma_l = 0$ from initialization. This is consistent with the Adaptive Coordination Gating regime $\bar{\sigma} \ll 1$ in \Cref{thm:decomposition}.

\begin{figure}[!htbp]
\centering
\includegraphics[width=0.95\columnwidth]{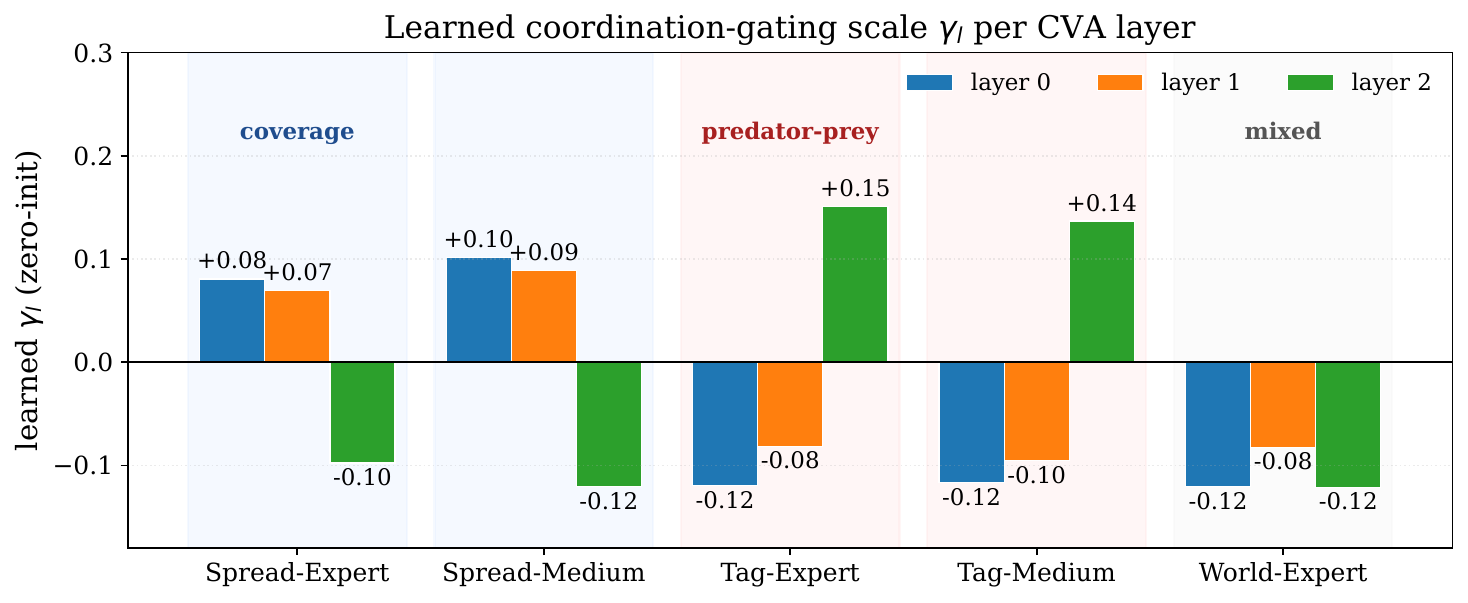}
\caption{Learned $\gamma_l$ per CVA layer on five MPE configurations, all under the centralized variant CoFlow-C. Background shading marks task type: blue for the coverage task Spread, red for predator-prey pursuit Tag, and gray for the mixed-environment task World. Coverage tasks favour positive early-layer gating, while predator-prey pursuit inverts the early-layer sign.}
\label{fig:gamma_signs}
\end{figure}

\FloatBarrier
\section{Theoretical Foundations}
\label{app:method_details}
\label{app:proofs}

This appendix collects the derivations, proofs, and algorithms that support the main text. \S\ref{app:preliminaries} recaps the prerequisites: linear flow matching, the averaged velocity field with its consistency target, and the Wasserstein-2 distance. \S\ref{app:notation} fixes the notation. \S\ref{app:fd_jvp_equiv} shows why the finite-difference surrogate of \Cref{eq:consistency_surrogate} is a controlled substitute for the JVP-based correction. \S\ref{app:decomposition} proves the joint velocity decomposition that bounds the cross-agent correction by directly measurable architectural quantities. \S\ref{app:wasserstein} propagates the resulting training error to single-pass sample quality. \S\ref{app:scaling} summarizes the team-size implication of the bound. \S\ref{app:method_details_inline} closes with implementation notes, and \S\ref{app:algos} gives the training and inference pseudocode.

\subsection{Preliminaries}
\label{app:preliminaries}

Three concepts underlie the proofs that follow: linear flow matching, the averaged velocity field with its consistency target, and the Wasserstein-2 distance. We summarize them briefly so that the proof statements stand on their own.

\smallskip\noindent\textbf{Linear flow matching.} Given a clean joint trajectory $\tau_0 \sim p_\tau$ from the offline dataset and Gaussian noise $z_1 \sim \mathcal{N}(0, I)$, the linear interpolant
\begin{equation}
z_t = (1 - t)\, \tau_0 + t\, z_1, \qquad t \in [0, 1],
\end{equation}
defines a flow whose sample-conditional velocity is the constant $v_{\mathrm{cond}} \coloneqq z_1 - \tau_0$. Flow matching learns a velocity field $v_\theta(z, t)$ by least-squares regression against $v_{\mathrm{cond}}$ \citep{lipman2023flow, liu2023flow}.

\smallskip\noindent\textbf{Averaged velocity consistency.} We learn an \emph{averaged} velocity field
\begin{equation}
u_\theta(z_t, r, t) \approx \frac{1}{t - r}\int_r^t v^*(z_s, s)\, ds,
\end{equation}
which integrates the instantaneous velocity over the interval $[r, t]$ rather than evaluating it at a single time point. For $r = 0$ and $t = 1$, the averaged velocity supports the single-pass shortcut $\hat\tau_0 = z_1 - u_\theta(z_1, 0, 1)$. To make this single-pass estimate accurate, averaged-velocity training imposes a consistency relation between $u_\theta(z_t, r, t)$ and $u_\theta(z_t, 0, t)$ that is usually implemented with a Jacobian-vector product (JVP) through the network. Such JVP-based training is memory-prohibitive for multi-agent backbones. \Cref{eq:consistency_surrogate} replaces it with a damped finite-difference surrogate, and \S\ref{app:fd_jvp_equiv} gives the resulting Taylor control.
When $r=t$, the averaged velocity is understood by its continuous extension to the instantaneous velocity limit.

\smallskip\noindent\textbf{Wasserstein-2 distance.} For two probability measures $\mu, \nu$ on $\mathbb{R}^{Nd}$ with finite second moment, the Wasserstein-2 distance is
\begin{equation}
W_2(\mu, \nu) \coloneqq \Bigl(\inf_{\pi \in \Pi(\mu, \nu)} \int \|x - y\|^2\, d\pi(x, y)\Bigr)^{1/2},
\end{equation}
where $\Pi(\mu, \nu)$ is the set of couplings of $\mu$ and $\nu$. Two facts are used repeatedly. First, picking any specific coupling $\pi$ gives an upper bound; in particular, when both samples can be expressed as functions of a common random variable $\xi$ (an "identity coupling"),
\begin{equation}
W_2(\mu, \nu) \leq \bigl(\mathbb{E}_\xi \|F(\xi) - G(\xi)\|^2\bigr)^{1/2}.
\end{equation}
Second, $W_2$ satisfies the triangle inequality $W_2(\mu, \nu) \leq W_2(\mu, \rho) + W_2(\rho, \nu)$, which we use to split the single-pass error into approximation and regression-gap terms.

\subsection{Notation}
\label{app:notation}

All symbols used throughout the appendix are listed in the tables below. Unless stated otherwise, vector norms are Euclidean, matrix norms are Frobenius for feature tensors and spectral for linear maps, and $p_X$ denotes the law of a random variable $X$.

\begin{table}[h]
\centering
\small
\caption{Joint trajectories, flow, and velocity-field symbols.}
\label{tab:notation_flow}
\begin{tabular}{r@{\quad}l}
\toprule
Symbol & Meaning \\
\midrule
$N$ & number of agents \\
$d$ & flattened per-agent trajectory/state dimension used by the flow \\
$z_t \in \mathbb{R}^{Nd}$ & joint state at flow-time $t \in [0, 1]$ \\
$z^i_t \in \mathbb{R}^d$ & agent-$i$ component of $z_t$ \\
$r,t$ & reference and endpoint flow times, usually sampled with $0 \leq r \leq t \leq 1$ \\
$z_1 \sim \mathcal{N}(0, I)$ & noise endpoint \\
$\tau_0 \sim p_\tau$ & data-side endpoint, drawn from data distribution $p_\tau$ \\
$v^*(z,t)$ & true instantaneous velocity field along the flow \\
$v_{\mathrm{cond}} \coloneqq z_1 - \tau_0$ & conditional regression target \\
$u_\theta(z_t, r, t)$ & trained averaged velocity field over $[r, t]$ \\
$V_\theta(z_t,r,t)$ & finite-difference surrogate velocity used for training \\
$u^*(z_1, 0, 1) \coloneqq \mathbb{E}[z_1 - \tau_0 \mid z_1]$ & conditional-expectation oracle \\
$\hat\tau_0 \coloneqq z_1 - u_\theta(z_1,0,1)$ & model one-step estimate \\
$\tau_0^* \coloneqq z_1 - u^*(z_1, 0, 1)$ & oracle one-step estimate \\
\bottomrule
\end{tabular}
\end{table}

\begin{table}[h]
\centering
\small
\caption{CVA layer quantities. The U-Net carries $L$ Coordinated Velocity Attention layers indexed by $l \in \{1, \ldots, L\}$.}
\label{tab:notation_cva}
\begin{tabular}{r@{\quad}l}
\toprule
Symbol & Meaning \\
\midrule
$L$ & number of CVA layers \\
$c^i_l \in \mathbb{R}^{T_l \times F_l}$ & skip features of agent $i$ at layer $l$ \\
$T_l$, $F_l$ & skip sequence length and per-token feature width \\
$c^i_0$ & features entering the first CVA layer \\
$\alpha^{ij}_l \in [0, 1]$ & softmax attention weights, $\sum_j \alpha^{ij}_l = 1$ \\
$W_{V,l}$ & learned value-projection matrix \\
$\sigma_{\max}(W_{V,l})$ & largest singular value of $W_{V,l}$ \\
$\gamma_l \in \mathbb{R}$ & scalar gating coefficient, zero-initialized \\
$D_l \coloneqq \max_{i \neq j} \|c^i_l - c^j_l\|$ & pairwise feature diversity at layer $l$ \\
$\bar\sigma \coloneqq \max_l |\gamma_l|\, \sigma_{\max}(W_{V,l})$ & gating--projection scale \\
\bottomrule
\end{tabular}
\end{table}

\begin{table}[h]
\centering
\small
\caption{Per-agent decomposition and operators.}
\label{tab:notation_decomp}
\begin{tabular}{r@{\quad}l}
\toprule
Symbol & Meaning \\
\midrule
$\mathbf{e}_i \in \mathbb{R}^N$ & $i$-th standard basis vector \\
$\mathbf{e}_i \otimes \bar u^i_\theta(z^i_t, t) \in \mathbb{R}^{Nd}$ & block-stacking of the per-agent velocity into the $i$-th block \\
$\bar u^i_\theta(z^i_t, t) \in \mathbb{R}^d$ & per-agent independent velocity (no-attention output for agent $i$) \\
$\Delta_{\mathrm{attn}}(z_t, t) \in \mathbb{R}^{Nd}$ & cross-agent correction (everything beyond the per-agent term) \\
$I$ & identity matrix (distinct from the inverse-dynamics net $I^i_\phi$) \\
$\mathrm{sg}[\cdot] \equiv \mathrm{stopgrad}[\cdot]$ & stop-gradient operator \\
$W_2(\cdot, \cdot)$ & Wasserstein-2 distance \\
\bottomrule
\end{tabular}
\end{table}

Bound-specific quantities such as $\epsilon_{\mathrm{train}}$, $\epsilon_i$, $\epsilon_{\mathrm{coord}}$, and $\kappa_{\mathrm{reg}}$ are introduced at the theorem where they are first used.

\subsection{Taylor control of the finite-difference surrogate}
\label{app:fd_jvp_equiv}

This subsection justifies the training shortcut used in \Cref{eq:consistency_surrogate}. The exact averaged-velocity consistency update would use a time derivative of the averaged velocity field, which in practice means a Jacobian-vector product and second-order backpropagation through a large multi-agent backbone. CoFlow instead evaluates the same network twice, at the reference time $r$ and endpoint time $t$, and inserts their difference as a detached correction. The question addressed here is therefore deliberately weaker than ``did we compute the exact JVP?'': we need to show that the detached finite difference is a controlled local proxy whose effect vanishes with the time gap.

The point of the proof is local and practical: the two forward passes should move in the same leading Taylor direction as the derivative term that a JVP would expose, and the detached correction should become negligible when $r$ and $t$ are close. This is exactly what \Cref{prop:fd_jvp} establishes.

\begin{proposition}[Taylor control for the damped FD surrogate]
\label{prop:fd_jvp}
Fix $z \in \mathbb{R}^{Nd}$ and $0 \leq r \leq t \leq 1$, and write $h \coloneqq t-r$. Assume $u_\theta(z, 0, \cdot) \in C^2([0, 1])$, and let
\begin{equation}
M_1(z) \coloneqq \sup_{s \in [0,1]} \|\partial_t u_\theta(z, 0, s)\|, \qquad
M_2(z) \coloneqq \sup_{s \in [0,1]} \|\partial_t^2 u_\theta(z, 0, s)\|.
\end{equation}
The quantities $M_1$ and $M_2$ measure how quickly the velocity network changes along the time input while $z$ is fixed. They are the only smoothness quantities needed to control the detached correction. First, the difference of two ordinary forward evaluations admits the Taylor expansion
\begin{equation}
\label{eq:fd_taylor}
u_\theta(z, 0, t) - u_\theta(z, 0, r) = h\, \partial_t u_\theta(z, 0, r) + R(z, r, t),
\end{equation}
where the remainder is second order in the time gap:
\begin{equation}
\|R(z, r, t)\| \leq \tfrac{1}{2}h^2\, M_2(z).
\end{equation}
This is the step that links the practical two-forward implementation to the derivative direction that a JVP would expose. Substituting \Cref{eq:fd_taylor} into \Cref{eq:consistency_surrogate} then gives the actual surrogate used by CoFlow:
\begin{equation}
V_\theta(z, r, t)
= u_\theta(z, 0, r)
+ \mathrm{sg}\bigl[h^2\,\partial_t u_\theta(z, 0, r) + h\,R(z,r,t)\bigr],
\end{equation}
so the extra training signal is a detached, damped correction around the reference-time velocity. Its size obeys
\begin{equation}
\|V_\theta(z,r,t)-u_\theta(z,0,r)\| \leq h^2 M_1(z) + \tfrac{1}{2}|h|^3 M_2(z).
\end{equation}
Thus the FD surrogate applies a bounded stop-gradient correction: it nudges $u_\theta(z,0,r)$ toward the endpoint-time velocity, but its influence shrinks quadratically or faster as $t$ approaches $r$. This is the property needed for memory-efficient consistency training; no JVP or double-backward computation is required.
\end{proposition}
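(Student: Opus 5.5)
The plan is a one-variable Taylor argument applied to the map $g(s) \coloneqq u_\theta(z, 0, s)$ on $[0,1]$, which is $C^2$ by assumption, so everything reduces to scalar-parameter calculus on a vector-valued function. For \Cref{eq:fd_taylor} I would use Taylor's theorem with integral remainder rather than the Lagrange form, since $g$ is vector-valued and the mean-value form does not hold componentwise with a common point. Concretely I would write
\begin{equation}
g(t) - g(r) = h\, g'(r) + \int_r^t (t-s)\, g''(s)\, ds,
\end{equation}
and define $R(z,r,t)$ as the integral. Taking norms inside the integral and bounding $\|g''(s)\| \le M_2(z)$ gives $\|R\| \le M_2(z)\int_r^t (t-s)\,ds = \tfrac12 h^2 M_2(z)$. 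This is the only genuinely analytic step, and it needs just the fundamental theorem of calculus applied twice together with integration by parts.

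For the surrogate identity, I would substitute \Cref{eq:fd_taylor} into \Cref{eq:consistency_surrogate}. The prefactor $(t-r)=h$ multiplies the whole difference, giving $h\cdot(h\,g'(r) + R) = h^2 g'(r) + hR$. Since $h$ is a constant with respect to $\theta$, it can be moved inside $\mathrm{sg}[\cdot]$. I will remark explicitly that the stop-gradient does not change forward values. Hence the displayed expression for $V_\theta$ is an equality of values, and only the gradient semantics differ.

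For the size bound, I would apply the triangle inequality to $\|V_\theta - u_\theta(z,0,r)\| = \|h^2 g'(r) + hR\|$. Bounding $\|g'(r)\|\le M_1(z)$ and using the remainder bound gives $h^2 M_1 + \tfrac12 |h|^3 M_2$, where $|h|=h$ since $r\le t$. The qualitative claims then follow directly: both terms are $O(h^2)$ as $t\to r$, and no second-order graph is formed because the correction is detached.

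I expect the main obstacle to be interpretive rather than technical. The phrase ``nudges toward the endpoint-time velocity'' cannot be a quantitative claim under these hypotheses, because the damping factor $h$ means the correction is $h$ times the full difference $g(t)-g(r)$, not the full difference itself. I would therefore state it only as directional alignment, namely that the correction is a nonnegative multiple $h\in[0,1]$ of $g(t)-g(r)$. I would also be careful that $M_1$ and $M_2$ are suprema over all of $[0,1]$, which is the natural setting for both bounds, and that the edge case $h=0$ is trivial.
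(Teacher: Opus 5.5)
Your proposal is correct and follows the paper's proof essentially step for step. Both use the integral-remainder Taylor expansion of $s \mapsto u_\theta(z,0,s)$, the bound $\|R\| \le M_2(z)\int_r^t (t-s)\,ds = \tfrac12 h^2 M_2(z)$, substitution into the damped stop-gradient term, and the triangle inequality with $M_1(z)$; your extra remarks (the stop-gradient leaves forward values unchanged, and the ``nudge'' is only directional, since the forward value equals $(1-h)\,u_\theta(z,0,r) + h\,u_\theta(z,0,t)$ for $h \in [0,1]$) are accurate clarifications, not a different route.
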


\begin{proof}
We first isolate the one-dimensional time argument, because the claim concerns only the variation of the network output between the two queried times. Let $f(s) \coloneqq u_\theta(z, 0, s)$, which lies in $C^2([0, 1])$ by assumption. Taylor's theorem with integral remainder gives, for any $r, t \in [0, 1]$,
\begin{equation}
f(t) = f(r) + (t - r)\, f'(r) + \int_r^t (t - s)\, f''(s)\, ds.
\end{equation}
Subtracting $f(r)$ from both sides is exactly the finite difference computed by the two forward passes, so it yields the identity in \Cref{eq:fd_taylor} with
\begin{equation}
R(z, r, t) = \int_r^t (t - s)\, f''(s)\, ds.
\end{equation}
The remaining task is to show that this substitution does not introduce an uncontrolled term. Using the definition of $M_2(z)$,
\begin{equation}
\|R(z, r, t)\| \leq \int_{\min(r, t)}^{\max(r, t)} |t - s|\,\|f''(s)\|\, ds \leq M_2(z) \int_{\min(r, t)}^{\max(r, t)} |t - s|\, ds = \tfrac{1}{2}h^2\, M_2(z).
\end{equation}
Substituting the Taylor decomposition into the damped stop-gradient term of $V_\theta = u_\theta(z, 0, r) + h\,\mathrm{sg}[u_\theta(z, 0, t) - u_\theta(z, 0, r)]$ produces the displayed surrogate relation. The norm bound follows from the triangle inequality and the definition of $M_1(z)$, completing the proof.
\end{proof}

The proposition is stated per sample because that is where the surrogate's role is clearest. The training objective averages the same correction over trajectories, noise, and sampled time pairs; as long as the sampled time gaps have finite low-order moments, this averaging preserves the same qualitative conclusion. In particular, the detached part remains a time-gap-controlled perturbation of reference-time regression rather than a different learning target. The algorithm uses logit-normal time pairs restricted to $r \leq t$; see \Cref{alg:training}, line~4.

\subsection{Joint Velocity Decomposition}
\label{app:decomposition}

We turn to the architectural side. The CVA backbone produces a joint averaged velocity $u_\theta(z_t, 0, t)$ that mixes per-agent dynamics with cross-agent coordination. We decompose the joint output into a per-agent term plus a cross-agent correction, and bound the correction by two architectural quantities that can be read off the trained model: the gating--projection scale $\bar\sigma$ and the inter-agent feature diversity $D_0$. This should be read as a controllability statement: coordination can be nonzero, but its magnitude is limited by the learned gates and by how different the agents' features are. For readability, the bound below treats the non-attention U-Net blocks between CVA layers as non-expansive after normalization; if these blocks have Lipschitz constants larger than one, their product multiplies the right-hand side without changing the dependence on $N$, $L$, $\bar\sigma$, or $D_0$.

\begin{theorem}[Joint Velocity Decomposition]
\label{thm:decomposition}
The joint averaged velocity field $u_\theta$ produced by a network with $L$ CVA layers admits the decomposition
\begin{equation}
u_\theta(z_t, 0, t) = \sum_{i=1}^N \mathbf{e}_i \otimes \bar u^i_\theta(z^i_t, t) + \Delta_{\mathrm{attn}}(z_t, t),
\end{equation}
with the cross-agent correction bounded by
\begin{equation}
\|\Delta_{\mathrm{attn}}\| \leq \frac{\sqrt{N}}{3}\bigl[(1 + 3\bar{\sigma})^L - 1\bigr]\, D_0.
\end{equation}
In the small-gating regime $\bar{\sigma} \ll 1$ the bound simplifies to
\begin{equation}
\|\Delta_{\mathrm{attn}}\| \lesssim \sqrt{N}\,L\, \bar{\sigma}\, D_0.
\end{equation}
Equivalently, the per-agent normalized norm $N^{-1/2}\|\Delta_{\mathrm{attn}}\|$ removes the leading $\sqrt N$ factor.
\end{theorem}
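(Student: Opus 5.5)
The plan is to compare the CVA network layer by layer with its gate-free twin: the same network with every $\gamma_l$ set to zero. With all gates at zero, agent $i$'s features never see teammate information, so the output for agent $i$ depends only on $z^i_t$ and $t$. We therefore \emph{define} $\bar u^i_\theta(z^i_t,t)$ as the agent-$i$ output of this twin and set $\Delta_{\mathrm{attn}} \coloneqq u_\theta - \sum_i \mathbf{e}_i\otimes \bar u^i_\theta$. This makes the decomposition an identity; all of the content lies in the bound. Write $c^i_l$ for the gated features and $\tilde c^i_l$ for the ungated ones, and track the per-agent error $\delta_l \coloneqq \max_i \|c^i_l-\tilde c^i_l\|$, with $\delta_0=0$. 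The non-attention blocks are taken to be non-expansive, as stated before the theorem, so they do not increase $\delta_l$.

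The key one-layer estimate uses $\sum_j\alpha^{ij}_l=1$ to rewrite the attention message as a difference:
\begin{equation}
\gamma_l\sum_j \alpha^{ij}_l W_{V,l} c^j_l = \gamma_l W_{V,l} c^i_l + \gamma_l\sum_j \alpha^{ij}_l W_{V,l}(c^j_l-c^i_l).
\end{equation}
The second term has norm at most $|\gamma_l|\sigma_{\max}(W_{V,l}) D_l \le \bar\sigma D_l$. The first, self-term should be charged to per-agent dynamics. Strictly, it is not part of the ungated twin, so I would either absorb it into $\bar u^i_\theta$ through a per-agent reparametrization or simply bound it as well.

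Next comes the diversity recursion. The gated diversity satisfies $D_{l+1}\le D_l + 2\bar\sigma D_l + (\text{self-term contribution})$, which gives a growth factor of at most $(1+3\bar\sigma)$ per layer. Hence $D_l\le(1+3\bar\sigma)^l D_0$. For the error recursion, bounding the cross-agent message by $\bar\sigma D_l$ and the propagated error by $\delta_l$ gives $\delta_{l+1}\le \delta_l+\bar\sigma D_l$. Summing this geometric sequence yields
\begin{equation}
\delta_L\le \bar\sigma D_0\sum_{l<L}(1+3\bar\sigma)^l=\tfrac{1}{3}\bigl[(1+3\bar\sigma)^L-1\bigr]D_0.
\end{equation}
The constant $3$ comes from the bookkeeping that combines the self-term with the two-sided diversity growth; I expect to tune the split of terms so that exactly the factor $(1+3\bar\sigma)$ appears. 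Stacking the $N$ blocks then converts the per-agent maximum into the joint Euclidean norm: $\|\Delta_{\mathrm{attn}}\|\le\sqrt N\,\delta_L$. The small-gating form follows from $(1+3\bar\sigma)^L-1 = 3L\bar\sigma+O(\bar\sigma^2)$, and dividing by $\sqrt N$ gives the per-agent normalized statement.

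The main obstacle is the self-term $\gamma_l W_{V,l}c^i_l$ together with the nonlinear dependence of the softmax weights on teammates. A clean bound in terms of $D_0$ alone requires that everything not controlled by feature differences be attributed to $\bar u^i_\theta$. My first approach is to define $\bar u^i_\theta$ as the output of the network in which each agent attends only to copies of itself, i.e.\ $\alpha^{ij}$ replaced by $\delta_{ij}$. That network is genuinely per-agent, and it keeps the self-term exactly. The difference from the true network then involves only $\sum_j\alpha^{ij}(W_V c^j - W_V c^i)$, bounded by $\bar\sigma D_l$, and the weights $\alpha^{ij}$ never need to be differentiated. In the diversity recursion I would use the crude but safe per-layer bound $\|c^i_{l+1}-c^j_{l+1}\|\le (1+3\bar\sigma)D_l$: the self-maps contribute $\bar\sigma$, and each agent's cross message contributes $\bar\sigma D_l$.
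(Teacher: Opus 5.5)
Your route is the paper's own. You split the CVA update via $\sum_j\alpha^{ij}_l=1$ into the self map $(I+\gamma_lW_{V,l})c^i_l$ and a cross term of norm at most $\bar\sigma D_l$, propagate diversity with the factor $(1+3\bar\sigma)$, sum a geometric series, and pay $\sqrt N$ when stacking blocks. Your final comparison network, in which each agent attends only to itself, is the right one; the paper's split into a ``self term'' and a ``cross-agent interaction'' implicitly presupposes it. The gate-free twin you start with cannot work: for $D_0=0$ the self terms $\gamma_lW_{V,l}c^i_l$ still make $\Delta_{\mathrm{attn}}\neq0$ in general, while the bound forces $\Delta_{\mathrm{attn}}=0$.

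The gap is the error recursion $\delta_{l+1}\le\delta_l+\bar\sigma D_l$. With the self-attending twin, the true network applies $I+\gamma_lW_{V,l}$ to $c^i_l$ and the twin applies it to $\tilde c^i_l$, so
\[
c^i_{l+1}-\tilde c^i_{l+1}=(I+\gamma_lW_{V,l})(c^i_l-\tilde c^i_l)+\gamma_l\sum_j\alpha^{ij}_lW_{V,l}(c^j_l-c^i_l).
\]
The honest recursion is therefore $\delta_{l+1}\le(1+\bar\sigma)\delta_l+\bar\sigma D_l$. Your claim that the difference ``involves only'' the cross term overlooks that earlier discrepancies are re-amplified by later self maps. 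The paper's proof makes the same omission when it treats $\Delta_{\mathrm{attn}}$ as the plain sum of per-layer interaction terms. With your diversity factor $(1+3\bar\sigma)$, the correct recursion only yields $\delta_L\le\frac12[(1+3\bar\sigma)^L-(1+\bar\sigma)^L]D_0$. This strictly exceeds $\frac13[(1+3\bar\sigma)^L-1]D_0$ for every $L\ge2$ and $\bar\sigma>0$, so your bookkeeping does not reach the stated constant.

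The repair is a sharper diversity step. Since $c^i_{l+1}=c^i_l+\gamma_lW_{V,l}\sum_k\alpha^{ik}_lc^k_l$, the difference $c^i_{l+1}-c^j_{l+1}$ equals $c^i_l-c^j_l$ plus $\gamma_lW_{V,l}$ applied to a difference of two convex combinations of the $c^k_l$, which has norm at most $D_l$. Hence $D_{l+1}\le(1+\bar\sigma)D_l$, provided the interleaved blocks are non-expansive and shared across agents. Then
\[
\delta_L\le\bar\sigma D_0\sum_{l<L}(1+\bar\sigma)^{L-1-l}(1+\bar\sigma)^l=L\bar\sigma(1+\bar\sigma)^{L-1}D_0.
\]
Comparing coefficients of $\bar\sigma^k$, namely $k\binom{L}{k}\le3^{k-1}\binom{L}{k}$, shows this is at most $\frac13[(1+3\bar\sigma)^L-1]D_0$. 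The $\sqrt N$ stacking and the small-$\bar\sigma$ form then follow as you describe.
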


\begin{proof}
We argue by induction over the $L$ CVA layers. Define the per-layer gating--projection scale and its uniform upper bound
\begin{equation}
\bar{\sigma}_l \coloneqq |\gamma_l|\, \sigma_{\max}(W_{V,l}), \qquad \bar{\sigma} \coloneqq \max_l \bar{\sigma}_l.
\end{equation}

\smallskip\noindent\textbf{Base case ($L = 0$, no attention).} Without cross-agent attention, the network processes each agent independently through the shared backbone, so the output for agent $i$ depends only on $z^i_t$ and $t$:
\begin{equation}
u_\theta(z_t, 0, t)\big|_{\text{agent } i} = \bar u^i_\theta(z^i_t, t),
\end{equation}
and $\Delta_{\mathrm{attn}} = 0$. The bound holds trivially.

\smallskip\noindent\textbf{Inductive step.} We track two quantities through the layers: the interaction perturbation $\delta^i_l$ that captures the component of $c^i_l$ due to cross-agent coupling, and the pairwise feature diversity $D_l$.

\emph{Inductive hypothesis.} After $l - 1$ layers, the feature for agent $i$ decomposes as
\begin{equation}
c^i_{l-1} = \bar c^i_{l-1} + \delta^i_{l-1},
\end{equation}
where $\bar c^i_{l-1}$ is the no-attention component that depends only on agent $i$'s own input, and the diversity satisfies $D_{l-1} \leq (1 + 3\bar{\sigma})^{l-1} D_0$.

At layer $l$, cross-agent attention with residual connection produces
\begin{align}
c^i_l & = c^i_{l-1} + \gamma_l \sum_{j=1}^N \alpha^{ij}_l W_{V,l} c^j_{l-1} \\
      & = \underbrace{(I + \gamma_l W_{V,l})\, c^i_{l-1}}_{\text{self term}} + \underbrace{\gamma_l \sum_{j=1}^N \alpha^{ij}_l W_{V,l} (c^j_{l-1} - c^i_{l-1})}_{\text{cross-agent interaction}},
\end{align}
where the second line uses the softmax normalization $\sum_j \alpha^{ij}_l = 1$.

\emph{Bounding the interaction at layer $l$.} Since $\alpha^{ij}_l \geq 0$ and $\sum_j \alpha^{ij}_l = 1$,
\begin{equation}
\Bigl\|\gamma_l \sum_j \alpha^{ij}_l W_{V,l}(c^j_{l-1} - c^i_{l-1})\Bigr\| \leq \bar{\sigma}_l \cdot D_{l-1}.
\end{equation}

\emph{Propagation of diversity.} For any pair $(i, j)$,
\begin{align}
\|c^i_l - c^j_l\| & \leq \|(I + \gamma_l W_{V,l})(c^i_{l-1} - c^j_{l-1})\| + 2 \bar{\sigma}_l D_{l-1} \\
                  & \leq (1 + \bar{\sigma}_l) D_{l-1} + 2 \bar{\sigma}_l D_{l-1} = (1 + 3\bar{\sigma}_l) D_{l-1}.
\end{align}
Iterating this recurrence with $\bar{\sigma}_l \leq \bar{\sigma}$ uniformly gives
\begin{equation}
D_l \leq (1 + 3\bar{\sigma})^l D_0.
\end{equation}

\emph{Accumulating the total interaction.} For each layer, the per-agent interaction has norm at most $\bar\sigma_l D_{l-1}$, so the corresponding joint vector over $N$ agents has Euclidean norm at most $\sqrt{N}\bar\sigma_l D_{l-1}$. The total attention perturbation $\Delta_{\mathrm{attn}}$ is the sum of interaction terms across all $L$ layers. Using $D_{l-1} \leq (1 + 3\bar{\sigma})^{l-1} D_0$,
\begin{align}
\|\Delta_{\mathrm{attn}}\| & \leq \sqrt{N}\sum_{l=1}^L \bar{\sigma}_l \cdot D_{l-1} \leq \sqrt{N}\bar{\sigma} \sum_{l=1}^L (1 + 3\bar{\sigma})^{l-1} D_0 \\
                            & = \sqrt{N}\bar{\sigma} \cdot \frac{(1 + 3\bar{\sigma})^L - 1}{3\bar{\sigma}} \cdot D_0 = \frac{\sqrt{N}}{3}\bigl[(1 + 3\bar{\sigma})^L - 1\bigr] D_0.
\end{align}

\emph{Small-$\bar{\sigma}$ regime.} When the zero-initialized gates remain in the empirically observed small-gating regime, $\bar{\sigma} \ll 1$. Using $(1 + 3\bar{\sigma})^L \leq e^{3L\bar{\sigma}} \approx 1 + 3L\bar{\sigma}$ for small $\bar{\sigma}$,
\begin{equation}
\|\Delta_{\mathrm{attn}}\| \lesssim \sqrt{N}\,L\, \bar{\sigma}\, D_0.
\end{equation}

\smallskip\noindent\textbf{Vanishing of the cross-agent correction.} When $c^i_0 = c^j_0$ for all $i, j$, we have $D_0 = 0$, so $\Delta_{\mathrm{attn}} = 0$ by the bound above. The converse implication is not needed for the guarantee: degenerate gates, value projections, or feature directions can also suppress the correction. Thus $D_0$ should be read as an upper-envelope quantity, not as an if-and-only-if certificate of coordination.
\end{proof}

\subsection{Single-Pass Wasserstein Error Bound}
\label{app:wasserstein}

\Cref{thm:decomposition} bounds the architectural correction $\Delta_{\mathrm{attn}}$, but it does not yet say anything about sample quality. We now propagate the training error through to the distribution $p_{\hat\tau_0}$ produced by single-pass inference, measured in Wasserstein-2 distance against the data distribution $p_\tau$. The bound has two parts: an approximation term controlled by the trained network and a regression-gap term for the oracle one-step estimator. Thus only the first term is directly reduced by better training; the second term reflects the intrinsic limitation of a deterministic conditional-mean shortcut.

Let $p_\tau$ denote the true data distribution over joint trajectories $\tau \in \mathbb{R}^{Nd}$.

\begin{assumption}[Network approximation]
\label{asm:approx}
The trained network $u_\theta$ achieves an $\epsilon_{\mathrm{train}}$-approximation of the conditional expectation of the averaged velocity field:
\begin{equation}
\mathbb{E}_{z_1 \sim \mathcal{N}(0, I)} \|u_\theta(z_1, 0, 1) - u^*(z_1, 0, 1)\|^2 \leq \epsilon_{\mathrm{train}}^2,
\end{equation}
where $u^*(z_1, 0, 1) \coloneqq \mathbb{E}_{\tau_0 \mid z_1}[z_1 - \tau_0]$ is the regression target, i.e.\ the conditional mean of $z_1 - \tau_0$ under the source--data coupling used to define the stochastic interpolant.
\end{assumption}

Observe that $u^*$ is the conditional expectation of the velocity given $z_1$, not the sample-specific velocity. When $p(\tau_0 \mid z_1)$ is not a point mass, the oracle one-step estimate $\tau_0^* \coloneqq z_1 - u^*(z_1, 0, 1)$ recovers the conditional mean instead of the exact sample, so $W_2(p_{\tau_0^*}, p_\tau) > 0$ in general.

\begin{theorem}[Single-pass Wasserstein error bound]
\label{thm:one_step_error}
Under \Cref{asm:approx}, the single-pass sample distribution $p_{\hat\tau_0}$ satisfies
\begin{equation}
W_2(p_{\hat\tau_0}, p_\tau) \leq \epsilon_{\mathrm{train}} + \kappa_{\mathrm{reg}},
\end{equation}
where $\kappa_{\mathrm{reg}} \coloneqq W_2(p_{\tau_0^*}, p_\tau)$ is the regression gap of the oracle one-step estimator. To decompose the training error, choose an analogous per-agent-plus-residual decomposition for the oracle $u^*$,
\begin{equation}
u^*(z_1,0,1)=\sum_{i=1}^N \mathbf{e}_i \otimes \bar u^{i,*}(z^i_1,1)+\Delta^*_{\mathrm{attn}}(z_1,1),
\end{equation}
and define
\begin{align}
\epsilon_i^2 & \coloneqq \mathbb{E}\|\bar u^i_\theta - \bar u^{i,*}\|^2, \\
\epsilon_{\mathrm{coord}}^2 & \coloneqq \mathbb{E}\|\Delta_{\mathrm{attn}} - \Delta^*_{\mathrm{attn}}\|^2.
\end{align}
Here and below, the shared arguments $(z_1,0,1)$ are suppressed inside these error terms.
Then the training error itself decomposes as
\begin{equation}
\epsilon_{\mathrm{train}} \leq \sqrt{\sum_{i=1}^N \epsilon_i^2} + \epsilon_{\mathrm{coord}},
\end{equation}
\end{theorem}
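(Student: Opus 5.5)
The bound follows from the two facts about $W_2$ recalled in \S\ref{app:preliminaries}. We split the error with the triangle inequality, using the oracle one-step law $p_{\tau_0^*}$ as the intermediate point:
\begin{equation}
W_2(p_{\hat\tau_0}, p_\tau) \leq W_2(p_{\hat\tau_0}, p_{\tau_0^*}) + W_2(p_{\tau_0^*}, p_\tau).
\end{equation}
The second term is $\kappa_{\mathrm{reg}}$ by definition, so no work is needed there.

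For the first term we use the identity coupling through the common noise $\xi = z_1 \sim \mathcal{N}(0,I)$. We have $\hat\tau_0 = z_1 - u_\theta(z_1,0,1)$ and $\tau_0^* = z_1 - u^*(z_1,0,1)$, so the $z_1$ parts cancel and $\hat\tau_0 - \tau_0^* = u^*(z_1,0,1) - u_\theta(z_1,0,1)$. Hence
\begin{equation}
W_2(p_{\hat\tau_0}, p_{\tau_0^*}) \leq \bigl(\mathbb{E}\|u_\theta - u^*\|^2\bigr)^{1/2} \leq \epsilon_{\mathrm{train}}
\end{equation}
by \Cref{asm:approx}. One small point to check is that both laws have finite second moments. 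This follows from $z_1$ being Gaussian, $\mathbb{E}\|u_\theta-u^*\|^2<\infty$, and $p_\tau$ having finite second moment; we assume the latter implicitly.

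For the decomposition of $\epsilon_{\mathrm{train}}$, we take $\epsilon_{\mathrm{train}}$ to be (at most) the root-mean-square error $\|u_\theta - u^*\|_{L^2}$; it is enough to bound that quantity. Subtracting the decomposition of \Cref{thm:decomposition} from the oracle decomposition gives
\begin{equation}
u_\theta - u^* = \sum_{i=1}^N \mathbf{e}_i\otimes(\bar u^i_\theta - \bar u^{i,*}) + (\Delta_{\mathrm{attn}} - \Delta^*_{\mathrm{attn}}).
\end{equation}
We then apply Minkowski's inequality in $L^2(\Omega;\mathbb{R}^{Nd})$ to the two summands. The per-agent blocks occupy orthogonal coordinate blocks, so the squared norm of the block-stacked term equals $\sum_i \|\bar u^i_\theta - \bar u^{i,*}\|^2$ pointwise. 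Taking expectations gives $L^2$ norm $\sqrt{\sum_i \epsilon_i^2}$, and the residual contributes exactly $\epsilon_{\mathrm{coord}}$.

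The main obstacle is definitional rather than computational. The oracle decomposition $(\bar u^{i,*}, \Delta^*_{\mathrm{attn}})$ is chosen freely, so the inequality for $\epsilon_{\mathrm{train}}$ holds for every such choice. We should state that $\epsilon_{\mathrm{train}}$ is taken as the tight value $\|u_\theta-u^*\|_{L^2}$, or otherwise note that the bound applies to that $L^2$ error, which \Cref{asm:approx} only upper-bounds. We also need all the expectations to be over $z_1$ only, with $t=1$, so that the per-agent terms $\bar u^i_\theta(z_1^i,1)$ match the arguments in \Cref{thm:decomposition}.
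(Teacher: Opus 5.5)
Your proposal is correct and follows the paper's proof: the same triangle split through $p_{\tau_0^*}$, the same identity coupling through the shared $z_1$ under which the $z_1$ terms cancel, and your Minkowski step in $L^2$ is exactly what the paper does by expanding the square and bounding the cross-term with Cauchy--Schwarz. Your remark that $\epsilon_{\mathrm{train}}$ must be read as the exact root-mean-square error for the decomposition inequality to be meaningful is apt; the paper does this implicitly by writing $\epsilon_{\mathrm{train}}^2 = \mathbb{E}\|u_\theta - u^*\|^2$ with equality in its final step.
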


\begin{proof}
The proof proceeds in four steps: triangle decomposition, approximation error, regression gap, and training error decomposition.

\smallskip\noindent\textbf{Step 1: Wasserstein decomposition.} Let
\begin{equation}
\hat\tau_0 = z_1 - u_\theta(z_1, 0, 1), \qquad \tau_0^* = z_1 - u^*(z_1, 0, 1)
\end{equation}
denote the one-step estimate and the oracle one-step estimate. By the triangle inequality on $W_2$,
\begin{equation}
W_2(p_{\hat\tau_0}, p_\tau) \leq \underbrace{W_2(p_{\hat\tau_0}, p_{\tau_0^*})}_{\text{approximation error}} + \underbrace{W_2(p_{\tau_0^*}, p_\tau)}_{\text{regression gap}}.
\end{equation}

\smallskip\noindent\textbf{Step 2: Approximation error.} Using the identity coupling via the shared $z_1$ and the coupling definition of $W_2$,
\begin{align}
W_2(p_{\hat\tau_0}, p_{\tau_0^*}) & \leq \bigl(\mathbb{E}\|\hat\tau_0 - \tau_0^*\|^2\bigr)^{1/2} \\
                                  & = \bigl(\mathbb{E}\|u_\theta(z_1, 0, 1) - u^*(z_1, 0, 1)\|^2\bigr)^{1/2} \leq \epsilon_{\mathrm{train}}.
\end{align}

\smallskip\noindent\textbf{Step 3: Regression gap.} Define
\begin{equation}
\kappa_{\mathrm{reg}} \coloneqq W_2(p_{\tau_0^*}, p_\tau).
\end{equation}
This term is unavoidable for any one-step estimator that maps the source noise through the conditional mean velocity $u^*$. If the source--data coupling leaves multiple plausible data trajectories for the same $z_1$, the deterministic shortcut returns their conditional mean rather than a particular sample. The triangle decomposition above therefore gives
\begin{equation}
W_2(p_{\hat\tau_0}, p_\tau) \leq \epsilon_{\mathrm{train}} + \kappa_{\mathrm{reg}}.
\end{equation}

\smallskip\noindent\textbf{Step 4: Training error decomposition.} Decompose $u_\theta$ as in \Cref{thm:decomposition} and use the oracle decomposition from the theorem statement.
Since the $N$ agent blocks indexed by $\mathbf{e}_i$ are orthogonal,
\begin{align}
\epsilon_{\mathrm{train}}^2 & = \mathbb{E}\Bigl\|\sum_i \mathbf{e}_i \otimes (\bar u^i_\theta - \bar u^{i,*}) + (\Delta_{\mathrm{attn}} - \Delta^*_{\mathrm{attn}})\Bigr\|^2 \\
                            & = \underbrace{\sum_i \epsilon_i^2}_{\text{per-agent}} + 2\,\underbrace{\mathbb{E}\Bigl\langle\sum_i \mathbf{e}_i \otimes (\bar u^i_\theta - \bar u^{i,*}),\; \Delta_{\mathrm{attn}} - \Delta^*_{\mathrm{attn}}\Bigr\rangle}_{\text{cross-term}} + \underbrace{\epsilon_{\mathrm{coord}}^2}_{\text{coordination}}.
\end{align}
By Cauchy--Schwarz, the cross-term satisfies $|\text{cross-term}| \leq \sqrt{\sum_i \epsilon_i^2}\, \epsilon_{\mathrm{coord}}$. Therefore
\begin{equation}
\epsilon_{\mathrm{train}}^2 \leq \sum_i \epsilon_i^2 + 2\sqrt{\sum_i \epsilon_i^2}\, \epsilon_{\mathrm{coord}} + \epsilon_{\mathrm{coord}}^2 = \Bigl(\sqrt{\sum_i \epsilon_i^2} + \epsilon_{\mathrm{coord}}\Bigr)^2,
\end{equation}
giving $\epsilon_{\mathrm{train}} \leq \sqrt{\sum_i \epsilon_i^2} + \epsilon_{\mathrm{coord}}$.
\end{proof}

\subsection{Scaling with the number of agents}
\label{app:scaling}

The training-error decomposition of \Cref{thm:one_step_error} already contains the main team-size message. If the per-agent errors are uniformly bounded by $\bar\epsilon$, then the independent part contributes at most $\sqrt{N}\bar\epsilon$ in the joint Euclidean norm. The coordination part inherits the attention envelope in \Cref{thm:decomposition}: with $D_N \coloneqq \max_{i \neq j}\|c^i_0 - c^j_0\|$ and a relative coordination approximation factor $\epsilon_{\mathrm{attn}}$, the combined implication is
\begin{equation}
\epsilon_{\mathrm{train}} \lesssim \sqrt{N}\,\bar\epsilon + \sqrt{N}\,L\,\bar\sigma\,D_N\,\epsilon_{\mathrm{attn}}
\qquad (\bar\sigma \ll 1).
\end{equation}
The leading $\sqrt N$ factor is a consequence of measuring a joint vector in Euclidean norm; it disappears after per-agent normalization. Thus the quantity that matters for coordination scalability is not team size alone, but whether feature diversity $D_N$ and the learned gate scale $\bar\sigma$ remain controlled as the team grows.

\subsection{Implementation Notes}
\label{app:method_details_inline}

We collect implementation-level remarks that are referenced from \S\ref{sec:method} but are not required for the proofs above.

\smallskip\noindent\textbf{Averaged velocity field.} Along a flow trajectory, the instantaneous velocity field $v(z_t, t)$ and the averaged velocity field $u(z_t, r, t)$ are related by
\begin{equation}
v(z_t, t) = u(z_t, 0, t) + t\, \frac{d}{dt}u(z_t, 0, t),
\end{equation}
where the derivative is the total derivative along the path and includes the dependence of $u$ on $z_t$. Learning $u$ is sufficient to recover $v$ in principle, while CoFlow's training objective uses the finite-difference surrogate directly to avoid computing this derivative.

\smallskip\noindent\textbf{CVA skip connections.} At encoder layer $l$, each agent $i$ has skip features $c^i_l \in \mathbb{R}^{T_l \times F_l}$. The CVA-enhanced feature $\hat c^i_l$ is then concatenated with the original encoder feature $e^i_l$ before entering the decoder:
\begin{equation}
\tilde h^i_l = \mathrm{Concat}(e^i_l,\, \hat c^i_l).
\end{equation}
CVA operates independently at each layer; it is layer-local but agent-global. The Adaptive Coordination Gate zero-initializes $\gamma_l$, so training begins from the independent-agent solution and only progressively activates coordination, which keeps the model inside the small-$\bar\sigma$ regime of \Cref{thm:decomposition}.

\smallskip\noindent\textbf{CTDE execution modes.} During centralized training the model accesses joint observations $o_t = (o^1_t, \ldots, o^N_t)$ and produces
\begin{equation}
\hat\tau_0 = z_1 - u_\theta(z_1, 0, 1 \mid o_t, R).
\end{equation}
At decentralized deployment, agent $i$ uses a masked joint context $m_i(o_t)$ that keeps its own local observation history and masks unavailable teammate observations:
\begin{equation}
\hat\tau^{(i)}_0 = z_1 - u_\theta(z_1, 0, 1 \mid m_i(o_t), R).
\end{equation}
Agent $i$ then executes only its own block of the generated trajectory. The CVA mechanism learned during centralized training enables implicit teammate modeling without communication. The shared inverse-dynamics head finally maps predicted states to actions via $a^i_t = I^i_\phi(o^i_t, \hat o^i_{t+1})$.

\smallskip\noindent\textbf{Return-conditioned velocity guidance.} Classifier-free guidance interpolates between the unconditional and conditional velocity fields:
\begin{equation}
\hat u_\theta(z_t, 0, t) = u_\theta(z_t, 0, t \mid \varnothing) + \omega\bigl(u_\theta(z_t, 0, t \mid R) - u_\theta(z_t, 0, t \mid \varnothing)\bigr).
\end{equation}
A larger $\omega$ biases toward higher-return trajectories at the cost of reduced diversity. During training the return condition is dropped with probability $0.25$ to learn the unconditional model.

\smallskip\noindent\textbf{Temporal context augmentation.} For locomotion tasks that require temporal awareness we condition on a sliding window of past observations,
\begin{equation}
h_t = [o_{t-C}, \ldots, o_t],
\end{equation}
of length $C$. This captures coordination dynamics such as gait synchronization. History conditioning effectively reduces the feature diversity term in the scaling discussion of \Cref{app:scaling}, since agents' features become more predictable given past patterns; this matches the 28\% MA-MuJoCo improvement reported in \Cref{tab:history_ablation}.

\subsection{Training and Inference Algorithms}
\label{app:algos}

We provide the detailed training and inference algorithms for CoFlow.

\begin{algorithm}[h]
\caption{CoFlow Training (default variant)}
\label{alg:training}
\begin{algorithmic}[1]
\REQUIRE Offline dataset $\mathcal{D}$, number of agents $N$, velocity network $u_\theta$, inverse-dynamics network $I_\phi$, flow ratio $\rho$, learning rate $\eta$, inverse-dynamics weight $\lambda$
\REPEAT
    \STATE Sample batch of trajectories $\{\tau^i\}_{i=1}^N$ and returns $R$ from $\mathcal{D}$;\ \ define $\tau_0 = (\tau^1, \ldots, \tau^N)$
    \STATE Sample noise $z_1 \sim \mathcal{N}(0, I)$
    \STATE Sample time pairs $(t, r)$ from logit-normal distribution with $r \leq t$
    \STATE With probability $\rho$, set $r \leftarrow t$ for flow consistency
    \STATE Compute interpolated trajectory: $z_t = (1-t) \cdot \tau_0 + t \cdot z_1$
    \STATE Compute conditional velocity: $v_{\mathrm{cond}} = z_1 - \tau_0$
    \STATE // Improved averaged-velocity loss (finite-difference surrogate of the averaged-velocity identity)
    \STATE Evaluate at the two time queries: $u_t \leftarrow u_\theta(z_t, 0, t)$,\ \ $u_r \leftarrow u_\theta(z_t, 0, r)$
    \STATE Stop-gradient finite difference: $\delta \leftarrow \mathrm{stopgrad}\bigl(u_t - u_r\bigr)$
    \STATE Compound velocity: $V \leftarrow u_r + (t-r)\,\delta$ \hfill // damped FD v-loss construction
    \STATE $\mathcal{L}_{\mathrm{vel}} = \|V - v_{\mathrm{cond}}\|^2$
    \STATE // Inverse dynamics loss (per-agent)
    \FOR{each agent $i = 1, \ldots, N$}
        \STATE Extract observations $(o^i_t, o^i_{t+1})$ and actions $a^i_t$ from $\tau^i$
        \STATE $\mathcal{L}^i_{\mathrm{act}} = \|a^i_t - I^i_\phi(o^i_t, o^i_{t+1})\|^2$
    \ENDFOR
    \STATE $\mathcal{L}_{\mathrm{act}} = \frac{1}{N} \sum_{i=1}^N \mathcal{L}^i_{\mathrm{act}}$
    \STATE $\mathcal{L} = \mathcal{L}_{\mathrm{vel}} + \lambda\, \mathcal{L}_{\mathrm{act}}$
    \STATE Update parameters: $(\theta, \phi) \leftarrow (\theta, \phi) - \eta \nabla \mathcal{L}$
\UNTIL{convergence}
\end{algorithmic}
\end{algorithm}

\noindent\textbf{CoFlow-base variant.} The ablation variant \emph{CoFlow-base} used throughout the step analyses is obtained from \Cref{alg:training} by removing the finite-difference surrogate term (lines 9--12) and regressing directly against the conditional velocity. Concretely, the surrogate loss $\mathcal{L}_{\mathrm{vel}}$ is replaced with the standard averaged-velocity loss $\mathcal{L}_{\mathrm{flow}}$ of \Cref{eq:flow_loss}, that is $\|u_\theta(z_t, 0, t) - v_{\mathrm{cond}}\|^2$. Both variants share the identical CVA-augmented backbone and inverse-dynamics head.

\noindent\textbf{Remark on the surrogate.} \Cref{prop:fd_jvp} shows that the damped FD correction is Taylor-controlled by the first and second time derivatives of $u_\theta$, so its objective-level influence is governed by the sampled time gap. We use this finite-difference form rather than a true \texttt{torch.autograd.functional.jvp} call to keep training memory compatible with the multi-agent backbone, and verify experimentally that the surrogate is sufficient to recover the few-step property observed in \Cref{fig:step_mpe,fig:step_smac,fig:step_mamujoco}.

\begin{algorithm}[h]
\caption{CoFlow Inference (One-Step Generation)}
\label{alg:inference}
\begin{algorithmic}[1]
\REQUIRE Trained velocity network $u_\theta$, trained inverse-dynamics network $I_\phi$, current observations $\{o^i_t\}_{i=1}^N$, target return $R$, guidance weight $\omega$
\STATE Sample noise $z_1 \sim \mathcal{N}(0, I)$
\STATE // One-step generation using averaged velocity field
\STATE $\hat{\tau}_0 = z_1 - u_\theta(z_1, 0, 1 \mid \{o^i_t\}, R)$
\STATE // Optional: Classifier-free guidance
\IF{$\omega > 1$}
    \STATE $u_{\mathrm{cond}} = u_\theta(z_1, 0, 1 \mid \{o^i_t\}, R)$
    \STATE $u_{\mathrm{uncond}} = u_\theta(z_1, 0, 1 \mid \{o^i_t\}, \varnothing)$
    \STATE $\hat{u} = u_{\mathrm{uncond}} + \omega\, (u_{\mathrm{cond}} - u_{\mathrm{uncond}})$
    \STATE $\hat{\tau}_0 = z_1 - \hat{u}$
\ENDIF
\STATE // Extract predicted next observations and compute actions
\FOR{each agent $i = 1, \ldots, N$}
    \STATE Extract $\hat{o}^i_{t+1}$ from $\hat{\tau}_0$
    \STATE $a^i_t = I^i_\phi(o^i_t, \hat{o}^i_{t+1})$
\ENDFOR
\RETURN Actions $\{a^i_t\}_{i=1}^N$
\end{algorithmic}
\end{algorithm}


\end{document}